\documentclass{article}

\usepackage{iclr2027_conference,times}
\usepackage{amsmath,amsfonts,bm}

\def\eqref#1{equation~\ref{#1}}
\def\1{\bm{1}}

\DeclareMathAlphabet{\mathsfit}{\encodingdefault}{\sfdefault}{m}{sl}
\SetMathAlphabet{\mathsfit}{bold}{\encodingdefault}{\sfdefault}{bx}{n}

\newcommand{\E}{\mathbb{E}}

\newcommand{\sigmoid}{\sigma}

\newcommand{\KL}{D_{\mathrm{KL}}}

\usepackage{amsmath,amssymb,amsthm,mathtools}
\usepackage{booktabs}
\usepackage{array,colortbl}
\usepackage{multirow}
\usepackage{graphicx}
\usepackage{subcaption}
\usepackage{microtype}
\usepackage{placeins}
\usepackage{wrapfig}
\usepackage{float}
\usepackage{needspace}
\usepackage{enumitem}
\usepackage{algorithm}
\usepackage[noEnd=true,indLines=true,italicComments=false,commentColor=black]{algpseudocodex}
\usepackage{hyperref}
\usepackage{url}
\usepackage[nameinlink]{cleveref}
\crefname{figure}{Fig.}{Figs.}
\crefname{equation}{Eq.}{Eqs.}
\Crefname{equation}{Equation}{Equations}
\crefname{algorithm}{Alg.}{Algs.}
\Crefname{algorithm}{Algorithm}{Algorithms}
\algrenewcommand{\algorithmicrequire}{\textbf{Input:}}
\definecolor{linkblue}{RGB}{0,68,136}
\definecolor{codepink}{RGB}{192,38,118}
\definecolor{oursblue}{RGB}{241,247,253}
\definecolor{sectiongray}{RGB}{242,242,242}
\definecolor{myred}{RGB}{154,0,0}
\definecolor{darkblue}{rgb}{0,0,0.5}
\definecolor{lightblue}{RGB}{11,83,150}
\definecolor{loopblue}{RGB}{230,240,237}
\newcolumntype{O}{>{\columncolor{oursblue}}c}
\hypersetup{
  colorlinks=true,
  linkcolor=linkblue,
  citecolor=linkblue,
  urlcolor=linkblue
}

\title{\centering ALoDLM: Adaptively Looped Diffusion\\ Language Models}

\author{%
  \begin{minipage}[t]{\dimexpr\textwidth-2\tabcolsep\relax}
  \centering
  \bfseries Liancheng Fang$^{1,*,\ddagger}$ \quad
  Zhuowei Li$^{2,*,\dagger}$ \quad
  Youngeun Kim$^{3,\ddagger}$ \quad Tianchen Zhao$^{2}$\\
  Rajat Koner$^{2}$ \quad Jiaye Wu$^{2}$ \quad
  Linghan Xu$^{2}$ \quad Xuanbai Chen$^{2}$\\
  Xiang Xu$^{2}$ \quad Zheng Zhang$^{2}$ \quad
  Jakub Zablocki$^{2}$ \quad Nishant Sankaran$^{2}$ \quad
  Yifan Xing$^{2}$\\[0.8em]
  \textnormal{$^{1}$University of Illinois Chicago \quad
  $^{2}$Amazon AGI \quad $^{3}$Korea University}\\[0.6em]
  \begingroup
  \hypersetup{urlcolor=codepink}
  \urlstyle{tt}
  \textnormal{Project: \url{https://alo-dlm.github.io}}\\
  \textnormal{Code: \url{https://github.com/amazon-science/ALoDLM}}\\
  \textnormal{Model: \url{https://huggingface.co/amazon/ALoDLM-8B}}
  \endgroup
  \end{minipage}%
}
\date{}

\newcommand{\modelname}{\texttt{ALoDLM}}

\newcommand{\stopgrad}{\operatorname{sg}}
\newcommand{\Unif}{\operatorname{Unif}}
\newcommand{\emb}{\operatorname{Emb}}

\newtheorem{proposition}{Proposition}
\newtheorem{theorem}[proposition]{Theorem}

\crefname{proposition}{Proposition}{Propositions}
\crefname{theorem}{Theorem}{Theorems}
\theoremstyle{definition}
\newtheorem{definition}{Definition}[section]
\crefname{definition}{Definition}{Definitions}
\theoremstyle{remark}

\iclrfinalcopy

\begin{document}

\addtocontents{toc}{\protect\setcounter{tocdepth}{-1}}

\maketitle
\fancyhead{}
\renewcommand{\headrulewidth}{0pt}

\begingroup
\renewcommand{\thefootnote}{\fnsymbol{footnote}}
\footnotetext[1]{Equal contribution.}
\footnotetext[2]{Project lead.}
\footnotetext[3]{Work done at Amazon.}
\footnotetext[0]{Contacts:
\href{mailto:lfang87@uic.edu}{\nolinkurl{lfang87@uic.edu}} \quad
\href{mailto:zhuoweli@amazon.com}{\nolinkurl{zhuoweli@amazon.com}}.}
\endgroup

\vspace{-10pt}
\begin{abstract}
  Diffusion language models (DLMs) enable fast generation by predicting multiple tokens in parallel, yet their practical adoption remains hindered by a persistent quality gap relative to comparably sized autoregressive (AR) models. We attribute this gap to a \textit{computation--difficulty mismatch}: within a partially observed sequence, some unknown tokens are readily predictable, while others require substantially more computation to resolve. Existing DLMs, however, apply uniform computational depth to every unknown position at each denoising step. We introduce \textbf{\modelname{}}, which replaces this uniform computation with token-adaptive latent recurrence. At each denoising step, \modelname{} iteratively refines representations in latent space, allocating computation based on token difficulty. Tokens ready to commit are fed back as discrete context, while unresolved tokens retain and refine their latent states through additional recurrent passes. To learn both token prediction and computation allocation end-to-end, we formulate token-wise computation schedules as latent variables and derive a conditional negative evidence lower bound (NELBO). We train \modelname{} at 1.7B and 8B parameter scales. Across eleven benchmarks, \modelname{} outperforms all evaluated DLMs and the corresponding AR baselines in average benchmark score at both scales.
  Importantly, \modelname{} combines superior generation quality with fast parallel decoding, establishing a strong quality--efficiency trade-off among all evaluated autoregressive and diffusion models under optimized inference engines.
\end{abstract}

\vspace{-10pt}
\begingroup
\setlength{\intextsep}{1pt}
\begin{figure}[H]
    \centering
    \setlength{\abovecaptionskip}{3pt}
    \setlength{\belowcaptionskip}{0pt}
    \begin{minipage}[c]{0.49\linewidth}
        \centering
        \includegraphics[
            width=\linewidth,
            height=1.0\linewidth,
            keepaspectratio
        ]{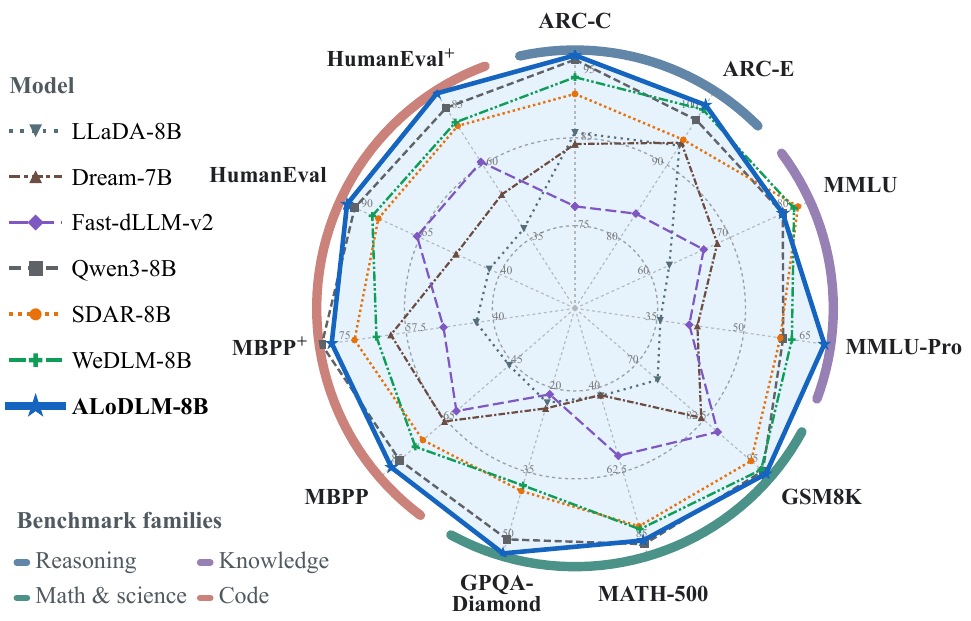}
    \end{minipage}%
    \hfill
    \begin{minipage}[c]{0.49\linewidth}
        \centering
        \includegraphics[
            width=\linewidth,
            height=0.7\linewidth,
            keepaspectratio
        ]{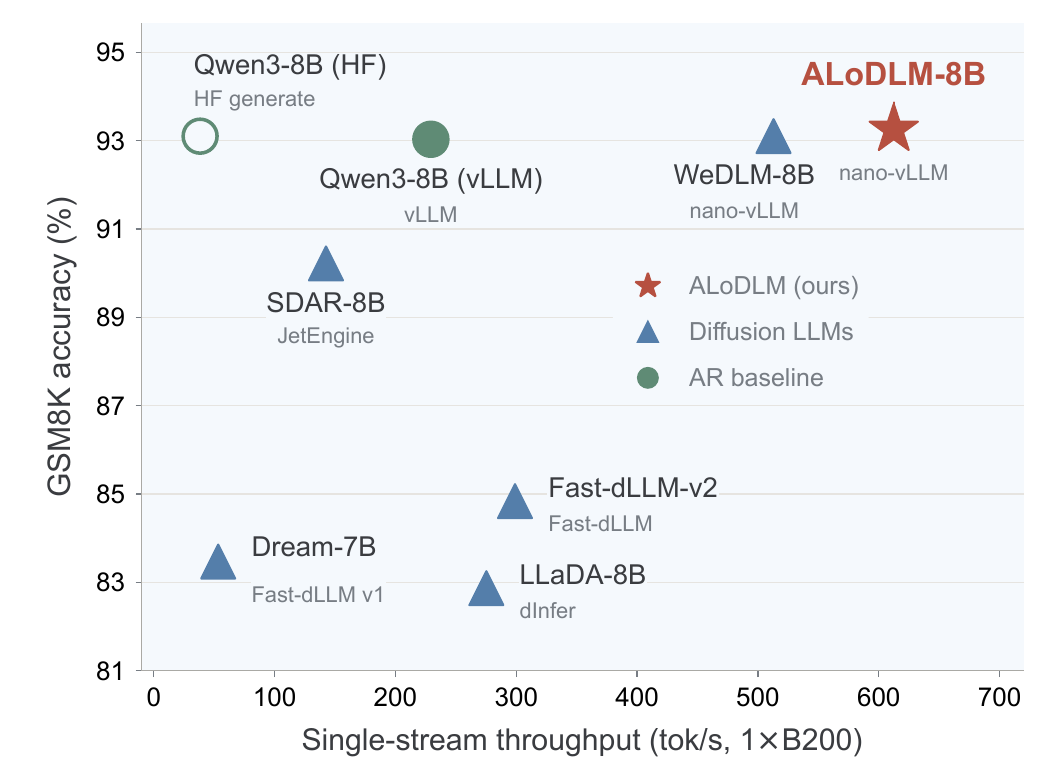}
    \end{minipage}
    \caption{
    \textbf{\modelname{} combines strong performance with fast inference.}
    \textbf{Left:} \modelname{}-8B outperforms all baselines on average
    across 11 reasoning, coding, and knowledge benchmarks.
    \textbf{Right:} On GSM8K, \modelname{}-8B delivers
    $\sim 2.7\times$ the throughput of vLLM-served Qwen3-8B
    at comparable or higher accuracy, while outperforming the
    diffusion baselines shown in both accuracy and throughput.
    }
    \vspace{-10pt}
    \label{fig:teaser-8b}
\end{figure}
\endgroup

\section{Introduction}
\label{sec:introduction}
Autoregressive (AR) large language models (LLMs) have demonstrated remarkable capabilities across diverse tasks~\citep{radford2018improving,raffel2020exploring,brown2020language}. However, their sequential, token-by-token decoding imposes a severe latency bottleneck on generation~\citep{pmlr-v202-leviathan23a,pmlr-v235-cai24b}. Diffusion language models (DLMs)~\citep{austin2021structured,lou2023sedd,shi2024simplified,ou2024absorbing,sahoo2024mdlm,nie2025llada,ye2025dream,song2025seed} offer a promising alternative: they iteratively denoise corrupted sequences, resolving multiple masked positions in parallel to enable substantially faster generation~\citep{zhang2025survey}. Despite this potential, their practical adoption remains hindered by a persistent quality gap relative to comparably sized AR models. Existing approaches mitigate this gap either by reintroducing autoregressive structures via semi-AR designs like block diffusion~\citep{arriola2025block,wu2025fastdllmv2,cheng2025sdar,liu2025wedlm,Fu2026NemotronLabsDiffusionAT}, or by relegating DLMs to drafting candidates for subsequent AR verification~\citep{Liu2025TiDARTI,Chen2026DFlashBD,yu2026introspective,sahoo2026unlocking}. We instead ask a more fundamental question:
\begin{center}
\textbf{\emph{What is the root cause underlying the generation quality gap in DLMs?}}
\end{center}

We hypothesize that a key bottleneck in DLM's generation quality is a \textit{computation--difficulty mismatch}. During a single denoising step, masked positions can vary significantly in prediction difficulty due to intrinsic uncertainty~\citep{hullermeier2021aleatoric}, multihop reasoning requirements~\citep{Huang2025TransformersPL}, or dependencies on unresolved tokens~\citep{wu2025fastdllmtrainingfreeaccelerationdiffusion}. Despite this variation, standard DLMs apply the same fixed-depth denoiser to all masked positions, training each to predict its target directly from a corrupted context. This uniform allocation wastes computation on easy predictions while starving difficult ones of necessary compute. AR generation largely avoids this within-step allocation problem by predicting a single next token conditioned on a fully resolved prefix.
Confidence-based decoding~\citep{nie2025llada} partially alleviates the problem by deferring low-confidence predictions to later denoising steps. However, the deferred tokens ultimately receive the same fixed compute since the latent computation performed for an unresolved token is discarded, and the token is recomputed from its mask embedding at the next step. 
This limitation motivates our central principle: DLMs should maintain
and refine persistent latent states for challenging tokens within each
denoising step, allowing computation to scale dynamically with
prediction difficulty.

In this work, we propose \textbf{\modelname}, a novel class of DLMs designed to overcome this bottleneck by adaptively allocating computation across tokens during mask recovery.
Our first contribution is a looped architecture with token-adaptive recurrent depth. In this design, easy tokens commit early to provide resolved context, while difficult tokens undergo additional recurrent passes to continually refine their persistent latent states. Building on this architecture, we formulate the token-wise computation schedule as a latent variable and derive a unified objective that jointly optimizes the base language model and the compute-allocation policy.
Finally, we derive an unbiased stochastic gradient estimator equipped with variance reduction techniques for stable end-to-end training.

Experimental results demonstrate that \modelname{} achieves state-of-the-art generation quality while maintaining fast parallel decoding.
We train \modelname{} at 1.7B and 8B scales through direct SFT of Qwen3 backbones on a 5B-token corpus. 
In quality evaluations across eleven benchmarks
spanning general reasoning, mathematics, and coding, 
\modelname{} achieves average scores of 65.5 and 80.3, surpassing all evaluated DLMs and the corresponding AR baselines. 
Furthermore, with parallel decoding on GSM8K, \modelname{}-8B delivers approximately $2.7\times$ the speed of vLLM-served Qwen3-8B at comparable or higher accuracy, demonstrating a strong quality--efficiency trade-off.
Finally, our analyses show that recurrent depth provides an effective test-time scaling axis and that \modelname{} learns token-dependent refinement preferences without explicit difficulty labels. Our main contributions are:
\begin{itemize}[
    leftmargin=1.25em,
    labelsep=0.45em,
    topsep=0pt,
    itemsep=2pt,
    parsep=0pt,
    partopsep=0pt]
    \item \textbf{Token-adaptive looped architecture.}
    We introduce a looped architecture with dynamic recurrent depth. Easy tokens commit early to act as resolved context, while difficult tokens continue to refine their accumulated latent states through additional recurrent passes.

    \item \textbf{Principled training objective.} We formulate token-wise exit schedules as latent
    variables and derive a conditional ELBO to jointly optimize token prediction and computation
    allocation. To stabilize training, we develop an unbiased single-trajectory gradient estimator
    coupled with effective variance reduction techniques.

    \item \textbf{Scaling and performance.} 
    We scale \modelname{} to 8B parameters, making it  to the best of our knowledge, the largest looped DLM to date. It achieves state-of-the-art average benchmark performance among all evaluated DLMs and outperforms its Qwen3-based AR counterparts, all while maintaining fast parallel generation. 
\end{itemize}

\section{Related Work}
\label{sec:related-work-main}

\textbf{Looped Transformers.}
Looped (or recurrent-depth) transformers repeatedly apply layers or blocks of layers, effectively
increasing network depth without expanding the parameter
count~\citep{dehghani2019universal,xue2022go,zhu2025beyond,bae2025mixture,lab2026nanbeige4,wang2026smelt,gao2026loop,jeddi2026loopformer,movahedi2026fixed,
sun2026universal}. This looping mechanism serves as a new scaling axis, yielding performance gains
on both algorithmic~\citep{giannou2023looped,yang2024looped} and reasoning
tasks~\citep{saunshi2025latent,geiping2025recurrent,zhu2025beyond,huang2026equilibrium}. A related
line of research leverages this recurrence for \emph{adaptive computation}, where the primary
challenge is learning a policy to determine when to halt. ACT~\citep{graves2016adaptive} and
Universal Transformers~\citep{dehghani2019universal} learn halting through continuous weighting of
intermediate states or outputs. PonderNet~\citep{banino2021pondernet} and Ouro~\citep{zhu2025ouro}
instead optimize the expected prediction loss by explicitly summing over candidate exit depths,
weighted by learned halting probabilities. In \modelname{}, token commitments change the context of
unresolved positions, making exact summation over joint exit schedules intractable.

\textbf{Looped Diffusion Language Models.}
Diffusion language models (DLMs) generate text through parallel token denoising and are typically trained with an ELBO-derived cross-entropy objective over masked positions \citep{sahoo2024mdlm,shi2024simplified,ou2024absorbing}. Recent work has integrated looping mechanisms into DLMs to improve training efficiency \citep{lee2026loopmdm}, parameter efficiency \citep{carballocastro2026recursive}, and step-wise adaptive computation \citep{miele2026fixedpoint}. Crucially, they apply a uniform recurrent depth across all token positions within each denoising step, lacking the flexibility to let individual tokens exit latent refinement adaptively. 
While FReDA~\citep{sun2026forwardfree} supports token-level adaptive commitment, it iteratively revises an explicit token draft rather than propagating an internal latent state across loops. In contrast, \modelname{} enables token-wise adaptive latent depth: easy tokens commit at earlier inner passes to provide their embeddings as context, while harder positions retain their evolving hidden states for continued refinement. For other related topics such as \emph{self-conditioning}, we refer the reader to Appendix \ref{sec:related-work} for an extended discussion.

\FloatBarrier
\section{Method}
\label{sec:method}
We introduce \modelname{}, a family of DLMs with token-adaptive recurrent
computation. We first describe its inference procedure in
\S\ref{sec:inference}, then derive the training objective and present
techniques for gradient estimation and variance reduction in
\S\ref{sec:joint-elbo}.

\begin{figure}[t]
    \centering
    \includegraphics[width=\textwidth]{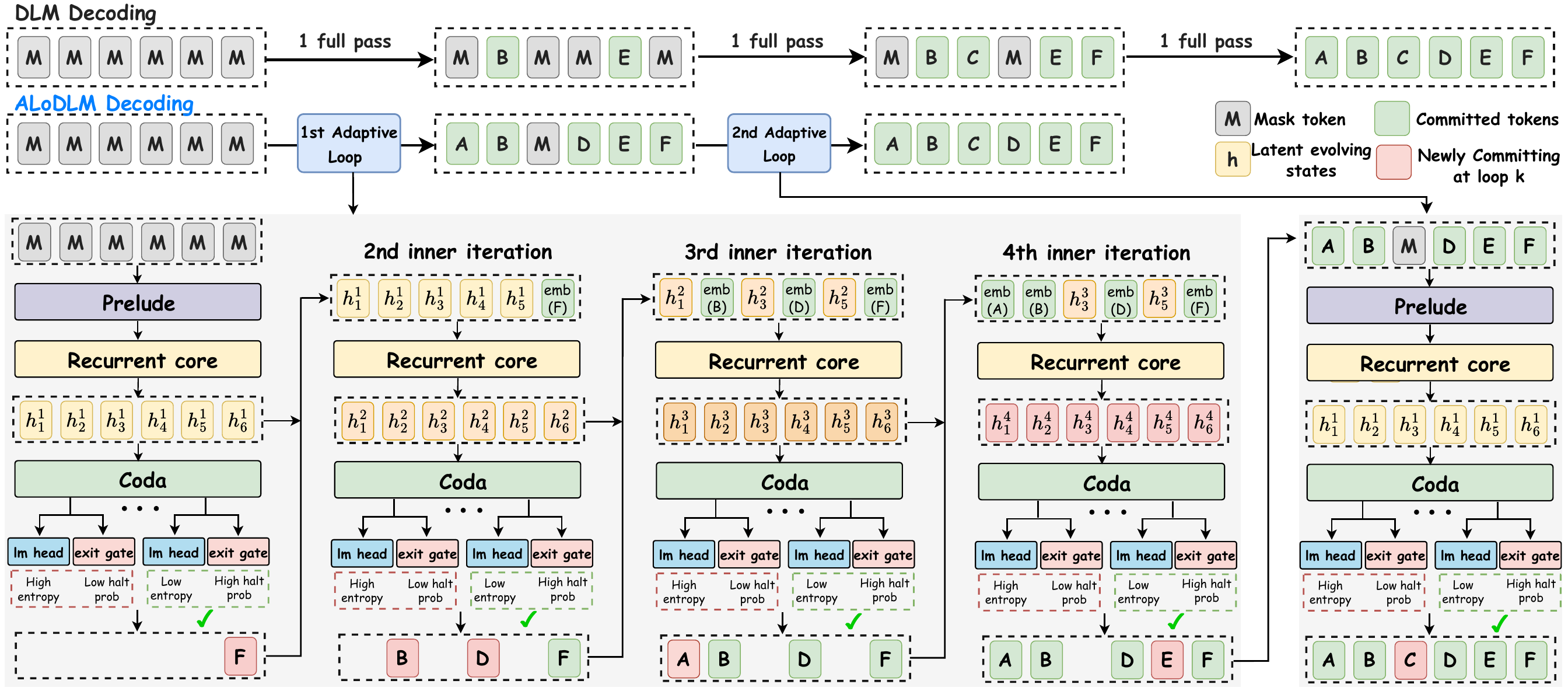}
    \caption{
    \textbf{Overview of \modelname{}.}
    Standard DLMs apply a fixed-depth denoiser at each denoising step.
    \modelname{} introduces token-adaptive recurrence within each step:
    confident tokens commit early and provide discrete context, while
    unresolved tokens retain and refine their latent states through
    additional recurrent passes.}
    \vspace{-10 pt}
    \label{fig:adadlm-overview}
\end{figure}

\subsection{Inference}
\label{sec:inference}

\textbf{Overview.}
At each denoising step (the \emph{outer loop}), a conventional fixed-depth
DLM predicts masked tokens in parallel using the same denoiser depth.
\modelname{} augments each step with an \emph{inner loop} that allows
tokens to commit at different recurrent depths. Committed tokens provide
discrete context for subsequent predictions, while unresolved positions
preserve their latent states for further refinement. Predictive confidence
controls token commitment, and a learned halting policy determines when
to end the inner loop. Algorithm~\ref{alg:decode} and
Figure~\ref{fig:adadlm-overview} summarize the procedure.

\textbf{Architecture.}
We adopt a standard Transformer architecture, such as Qwen3~\citep{yang2025qwen3}, and partition the
model architecture into three components:
a $\mathsf{Prelude}$ consisting of the token embedding layer and
optional prefix Transformer blocks;
a $\mathsf{Recurrent \ Core}$ comprising the intermediate Transformer blocks; and a $\mathsf{Coda}$ comprising any remaining suffix
Transformer blocks.

\textbf{Adaptive recurrent depth.}
At the beginning of each denoising step, let
$\mathbf x_t:=(x_1,\ldots,x_N)$ denote the corrupted input and
$\mathcal U:=\{i:x_i=\mathtt{[MASK]}\}$ its initial set of unresolved
positions. The $\mathsf{Prelude}$ produces the initial recurrent state
${\color{myred}h^{(0)}}=\mathsf{Prelude}(\mathbf x_t)
\in\mathbb R^{N\times d}$, where $d$ is the hidden dimension.
At recurrent pass $s$, the $\mathsf{Recurrent \ Core}$ updates the current state
${\color{myred}h^{(s-1)}}$ to produce an intermediate state ${\color{lightblue}\widetilde h^{(s)}}$,
which the $\mathsf{Coda}$ maps to a readout state:
\begin{equation}
    {\color{lightblue}\widetilde h^{(s)}}
    =\mathsf{Recurrent \ Core}({\color{myred}h^{(s-1)}}),
    \qquad
    r^{(s)}
    =\mathsf{Coda}({\color{lightblue}\widetilde h^{(s)}}),
    \qquad s=1,\ldots,K,
    \label{eq:looped-forward}
\end{equation}
where $K$ is the maximum recurrent depth. 
Finally, each readout state $r_i^{(s)}$ is processed by two heads in parallel. The unembedding head
$\mathsf{LMHead}:\mathbb R^d\to\mathbb R^{|\mathcal V|}$ produces the vocabulary distribution, while
an additional head $\mathsf{ExitGate}:\mathbb R^d\to\mathbb R$ produces a scalar logit whose sigmoid
defines the per-pass halting probability, \textit{i.e.}, the probability of halting at pass \(s\), conditioned on not having halted earlier:
\begin{equation}
    p_i^{(s)}
    =\operatorname{softmax}\!\left(\mathsf{LMHead}(r_i^{(s)})\right),
    \qquad
    \lambda_i^{(s)}
    =
    \begin{cases}
        \sigmoid\!\left(
            \mathsf{ExitGate}(\stopgrad[r_i^{(s)}])
        \right), & s<K,\\
        1, & s=K.
    \end{cases}
    \label{eq:hazard}
\end{equation}
Here, $\sigmoid$ denotes the sigmoid function, and $\stopgrad$ stops gradients from the exit gate into
the backbone. We set $\lambda_i^{(K)}=1$ to force all remaining tokens to exit at depth $K$, ensuring that the exit-depth probabilities sum to one. The two heads serve complementary roles: we use predictive confidence from the $\mathsf{LMHead}$ to determine which tokens are ready to commit, while the $\mathsf{ExitGate}$ controls whether to continue refining unresolved positions. We present the detailed procedure below.

\begin{wrapfigure}{R}{0.48\textwidth}
  \vspace{-10pt}
  \begin{minipage}{\linewidth}
    \begin{algorithm}[H]
      \caption{\modelname{} decoding}
      \label{alg:decode}
      \begingroup
        \small
        \algrenewcommand{\algorithmicindent}{1em}
        \algrenewcommand{\alglinenumber}[1]{%
          \makebox[0pt][r]{\footnotesize #1:}}
        \begin{algorithmic}[1]
          \Require masked sequence $\mathbf x$, maximum depth $K$,
          entropy threshold $\tau$, cumulative halt threshold $q$
          \BeginBox[fill=loopblue,draw=none]
            \While{$\mathbf x$ contains $\mathtt{[MASK]}$}
              \Comment{\textbf{Outer loop}}
            \EndBox
            \State $\mathcal U\gets\{i:x_i=\mathtt{[MASK]}\}$
              \label{alg:decode-active}
            \State $h\gets\mathsf{Prelude}(\mathbf x)$;
              $a\gets0$; $\mathcal C\gets\varnothing$
              \label{alg:decode-init}
            \BeginBox[fill=loopblue,draw=none]
              \For{$s=1,\ldots,K$}
                \label{alg:decode-inner}
                \Comment{\textbf{Inner loop}}
              \EndBox
              \State $h\gets\mathsf{Recurrent \ Core}(h)$
              \State $r\gets\mathsf{Coda}(h)$
              \State Compute $p_i,\lambda_i$ for $i\in\mathcal U$
                by \cref{eq:hazard}
                \label{alg:decode-heads}
              \State $a_i\gets a_i+(1-a_i)\lambda_i$,
                $i\in\mathcal U$
                \label{alg:decode-cumulative}
              \State $\mathcal B\gets
                \{i\in\mathcal U:\mathsf{Entropy}(p_i)\leq\tau\}$
                \label{alg:decode-commit}
              \State $\widehat y_i\gets\mathsf{Sample}(p_i)$,
                $i\in\mathcal B$
              \State $\mathcal C\gets\mathcal C\cup\mathcal B$;
                $\mathcal U\gets\mathcal U\setminus\mathcal B$
                \label{alg:decode-remove}
              \State $h_i\gets\emb(\widehat y_i)$,
                $i\in\mathcal C$
                \label{alg:decode-feedback}
              \Statex \Comment{Committed tokens become context}
              \State \textbf{if} $\mathcal U=\varnothing$
                \textbf{or} $\operatorname{mean}_{i\in\mathcal U}a_i
                \geq q$ \textbf{then break}
                \label{alg:decode-stop}
            \EndFor
            \If{$\mathcal C=\varnothing$}
              \Comment{Ensure progress}
              \State $j\gets\operatorname*{arg\,min}_{i\in\mathcal U}
                \mathsf{Entropy}(p_i)$
              \State $\widehat y_j\gets\mathsf{Sample}(p_j)$
              \State $\mathcal C\gets\{j\}$
            \EndIf
            \State $x_i\gets\widehat y_i$, $i\in\mathcal C$
              \label{alg:decode-update}
          \EndWhile
          \State \Return $\mathbf x$
        \end{algorithmic}
      \endgroup
    \end{algorithm}
  \end{minipage}
  \vspace{-30pt}
\end{wrapfigure}

\textbf{Token commitment and latent feedback.}
At pass $s$, let $\mathcal B^{(s)}\subseteq\mathcal U$ contain
the unresolved positions whose predictive entropy under $p_i^{(s)}$
is at most $\tau$. For each $i\in\mathcal B^{(s)}$, we sample
$\widehat y_i\sim p_i^{(s)}$ and remove the position from $\mathcal U$ (i.e. committed).
Let $\mathcal C^{(s)}:=\bigcup_{s'=1}^{s}\mathcal B^{(s')}$
denote all positions committed within the current outer step.
The next recurrent input combines their token embeddings with
the core outputs at the remaining positions:
\begin{equation}
    {\color{myred}h_i^{(s)}}=
    \begin{cases}
        \emb(\widehat y_i), & i\in\mathcal C^{(s)},\\
        {\color{lightblue}\widetilde h_i^{(s)}}, & \text{otherwise},
    \end{cases}
    \label{eq:mixed-carry}
\end{equation}
where $\emb$ is the token embedding function.
Thus, committed tokens supply discrete context at subsequent passes,
while unresolved positions build on their accumulated latent states.

\textbf{Adaptive stopping.}
The exit gate controls whether to continue refining the remaining
unresolved positions. Starting from $a_i^{(0)}=0$, we accumulate
the halting probabilities as
$a_i^{(s)}=a_i^{(s-1)}
+(1-a_i^{(s-1)})\lambda_i^{(s)}$.
After removing newly committed positions, the inner loop terminates
if $\mathcal U$ is empty or if the mean cumulative halt probability
over $\mathcal U$ reaches the threshold $q$.
Otherwise, recurrence continues up to depth $K$.

At the end of the inner loop, committed tokens are written into
the sequence. If no token has committed, we select the unresolved
position with the lowest predictive entropy and commit its prediction
to ensure progress. The next outer step reinitializes the recurrent
state from the updated sequence, and decoding continues until no
masked positions remain.

\FloatBarrier
\subsection{Training}
\label{sec:joint-elbo}
\label{sec:policy-gradient}
\label{sec:training}
\label{sec:on-policy-rollout}
\label{sec:policy-optimization}

Training \modelname{} involves jointly learning a denoiser and a halting policy that dictates when to commit predictions. This section first introduces the joint exit schedule (\cref{def:exit-schedule}) and its effect on token predictions. We then derive a variational objective to optimize both components (\cref{eq:conditional-nelbo}). Finally, we present an unbiased gradient estimator to optimize this objective (\cref{thm:one-schedule-estimator}) and introduce
practical regularization and variance reduction techniques.

\textbf{Variational objective.}
To learn token-wise computation allocation, we treat exit depths as latent variables. As each
commitment changes the context for unresolved positions, we model these depths jointly through an
exit schedule.

\begin{definition}[Exit schedule]
\label{def:exit-schedule}
For a corrupted input $\mathbf x_t$, let
$\mathcal M_t:=\{i:x_i=\mathtt{[MASK]}\}$ denote the masked positions.
An \emph{exit schedule} is a discrete random vector
$\mathbf z=(z_i)_{i\in\mathcal M_t}
\in\{1,\ldots,K\}^{|\mathcal M_t|}$, where $z_i=s$ indicates that
token $i$ commits after recurrent pass $s$.
\end{definition}

A noisy sequence $\mathbf{x}_t$ and a fixed exit schedule $\mathbf{z}$ fully specify one outer iteration which comprises up to $K$ inner loops. During this process, early-committed tokens become context for the subsequent inner loops, while the remaining uncommitted tokens continue to refine their latent states. Formally, denote the clean sequence as $\mathbf{y}\in\mathcal{V}^N$. The base model gives us the likelihood of the clean data $\mathbf{y}_{\mathcal{M}_t}$, conditioned on the noisy sequence $\mathbf x_t$ and the exit schedule $\mathbf z$. An intuitive idea is to marginalize the exit schedules and seek to maximize the following conditional likelihood:
\begin{equation}
    p_\theta(\mathbf{y}_{\mathcal{M}_t}\mid\mathbf{x}_t)
    =\sum_{\mathbf{z}}\pi(\mathbf{z})\,
      p_\theta(\mathbf{y}_{\mathcal{M}_t}\mid\mathbf{x}_t,\mathbf{z}).
    \label{eq:schedule-marginal}
\end{equation}
However, this summation ranges over exponentially many schedules, each dictating a separate sequence of model fowards\footnote{For a nonempty masked set,
evaluating a schedule $\mathbf z$ requires $\max_{i\in\mathcal M_t}z_i$ recurrent passes.}:
because committed tokens feed their embeddings back into the recurrent state (\cref{eq:mixed-carry}), altering a single token's exit depth modifies the context for the remaining tokens, thereby shifting the entire subsequent trajectory.

We instead introduce a variational distribution $q_\phi(\mathbf z\mid\mathbf x_t,\mathbf y_{\mathcal M_t})$, parameterized by the exit gate, and derive a negative evidence lower bound (NELBO) with prior distribution $\pi(\mathbf z)$ over exit schedules: 
\begin{equation}
  J(\theta,\phi;\mathbf x_t,\mathbf y)
    :=\E_{\mathbf z\sim q_\phi}\!\Biggl[
      \underbrace{
        -\sum_{i\in\mathcal M_t}
        \log p_{\theta,i}^{(z_i)}(y_i\mid\mathbf x_t;\mathbf z)
      }_{\mathcal L_{\mathrm{traj}}(\mathbf z)}
    \Biggr]
    +\KL(q_\phi\|\pi) \geq -\log p_\theta(\mathbf y_{\mathcal M_t}\mid\mathbf x_t).
    \label{eq:conditional-nelbo}
\end{equation}
The derivation is provided in Appendix~\ref{app:joint-elbo-proof}.
Here, $p_{\theta,i}^{(z_i)}(y_i\mid\mathbf x_t;\mathbf z)$ denotes the predicted likelihood of token $y_i$ at exit depth $z_i$, conditioned on the noisy input $\mathbf x_t$ and the joint exit schedule $\mathbf z$. Summing this loss over all masked positions $i\in\mathcal M_t$ yields the overall trajectory loss $\mathcal L_{\mathrm{traj}}$ for a given schedule. 
To encourage shallow exits, we follow \citet{banino2021pondernet} and formulate the prior distribution $\pi(\mathbf z)$ as a product of geometric distributions truncated to $\{1,\ldots,K\}$.

The NELBO above is evaluated on a single corrupted example $\mathbf x_t$. The full variational training objective is given by:
\begin{equation}
    \boxed{
    \mathcal L_{\mathrm{train}}(\theta,\phi)
    :=\E_{\mathbf y\sim p_{\mathrm{data}},\,t\sim\Unif[0,1],\,\mathbf x_t\sim q_t(\cdot\mid\mathbf y)}\!\left[
        \frac{1}{t}\,
        J(\theta,\phi;\mathbf x_t,\mathbf y)
      \right]
    },
    \label{eq:full-training-objective}
\end{equation}
where $p_{data}$ denotes the data distribution, and $q_t(\cdot \mid \mathbf y)$ denotes the \emph{forward diffusion process} that independently mask each position with probability $t$, and leave it unchanged with probability $1-t$.

\textbf{Gradient estimation.}
To estimate the gradients of \cref{eq:full-training-objective} with respect to $\theta$ and $\phi$, we sample an exit schedule $\mathbf{z} \sim q_\phi$ and evaluate $\mathcal{L}_{\mathrm{traj}}(\mathbf{z})$ along the sampled trajectory. While the denoiser parameters $\theta$ can be updated via standard backpropagation, we cannot directly backpropagate through the exit gate $\phi$ because the discrete $\mathbf{z}$ precludes ordinary pathwise differentiation. To address this, we employ a \emph{score-function estimator}\footnote{Continuous relaxations, such as Gumbel-Softmax~\citep{jang2017categorical}, provide a viable alternative. However, they would require extending \cref{eq:mixed-carry} to accommodate partially committed tokens, potentially introducing additional optimization challenges during training.}
~\citep{williams1992simple}, yielding the following unbiased gradient estimator:

\begin{proposition}[Unbiased gradient estimator]
\label{thm:one-schedule-estimator}
Sample $\mathbf y,t,\mathbf x_t$ as in \cref{eq:full-training-objective}, and sample
$\mathbf z\sim q_\phi$ sequentially during a single recurrent rollout. With the sampled exit
decisions held fixed during backpropagation, define
\begin{equation}
    \widehat{\mathcal L}_{\mathrm{sur}}(\mathbf z)
    =\frac{1}{t}\Biggl\{
    \mathcal L_{\mathrm{traj}}(\mathbf z)
    +
    \stopgrad\left[
        \mathcal L_{\mathrm{traj}}(\mathbf z)
        + \log\frac{q_\phi(\mathbf z)}{\pi(\mathbf z)}
    \right]
    \log q_\phi(\mathbf z)
    \Biggr\}.
    \label{eq:one-schedule-surrogate}
\end{equation}
Under the stop-gradient convention in \cref{eq:hazard}, with the policy features held fixed when
differentiating with respect to $\theta$, this surrogate yields unbiased gradients:
\begin{equation}
    \E_{\mathbf y,t,\mathbf x_t,\mathbf z}
    \!\left[\nabla_{\theta,\phi}\widehat{\mathcal L}_{\mathrm{sur}}(\mathbf z)\right]
    =\nabla_{\theta,\phi}\mathcal L_{\mathrm{train}}(\theta,\phi).
    \label{eq:unbiased-one-schedule-gradient}
\end{equation}
\end{proposition}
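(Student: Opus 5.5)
The approach is the standard score-function (REINFORCE) argument, applied conditionally on $(\mathbf y,t,\mathbf x_t)$ and then averaged. Because $1/t$ and the outer sampling distribution do not depend on $(\theta,\phi)$, and assuming enough regularity to exchange gradient and expectation (the losses are smooth in the parameters, $\mathbf z$ ranges over a finite set, and we condition on $t$ bounded away from zero or take integrability of $\tfrac1t J$ as given), it suffices to show that for fixed $(\mathbf x_t,\mathbf y)$,
\[
\E_{\mathbf z\sim q_\phi}\!\left[\nabla_{\theta,\phi}\Bigl\{\mathcal L_{\mathrm{traj}}(\mathbf z)+\stopgrad\bigl[\mathcal L_{\mathrm{traj}}(\mathbf z)+\log\tfrac{q_\phi(\mathbf z)}{\pi(\mathbf z)}\bigr]\log q_\phi(\mathbf z)\Bigr\}\right]=\nabla_{\theta,\phi}J(\theta,\phi;\mathbf x_t,\mathbf y).
\]
To set this up, I write $J=\sum_{\mathbf z}q_\phi(\mathbf z)\,F(\mathbf z)$ with $F(\mathbf z):=\mathcal L_{\mathrm{traj}}(\mathbf z)+\log q_\phi(\mathbf z)-\log\pi(\mathbf z)$, using $\KL(q_\phi\|\pi)=\E_{q_\phi}[\log q_\phi-\log\pi]$. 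The sequential sampling of $\mathbf z$ during a single rollout factorizes $q_\phi(\mathbf z)$ as a product of per-pass hazard terms $\lambda_i^{(s)}$ and $1-\lambda_i^{(s)}$. Each factor is evaluated on features that depend on earlier sampled decisions, so $\log q_\phi(\mathbf z)$ is a well-defined, differentiable function of $\phi$ once the sampled $\mathbf z$ is held fixed.

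\emph{The $\theta$-gradient.} Because the exit gate sees only $\stopgrad[r_i^{(s)}]$ and the policy features are held fixed when differentiating in $\theta$, $q_\phi$ has no $\theta$-dependence under this convention. Also, $\pi$ has no parameters. Therefore
\[
\nabla_\theta J=\sum_{\mathbf z}q_\phi(\mathbf z)\nabla_\theta\mathcal L_{\mathrm{traj}}(\mathbf z).
\]
On the surrogate side, the stop-gradient bracket and $\log q_\phi$ contribute nothing to the $\theta$-gradient, so only $\nabla_\theta\mathcal L_{\mathrm{traj}}(\mathbf z)$ remains, and its expectation matches. The fixed-$\mathbf z$ trajectory loss is an ordinary differentiable function of $\theta$, so backpropagation with frozen exit decisions computes exactly this gradient.

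\emph{The $\phi$-gradient.} $\mathcal L_{\mathrm{traj}}(\mathbf z)$ does not depend on $\phi$ for fixed $\mathbf z$, since the gate influences the trajectory only through which $\mathbf z$ is sampled. By the product rule,
\[
\nabla_\phi J=\sum_{\mathbf z}\nabla_\phi q_\phi(\mathbf z)\,F(\mathbf z)+\sum_{\mathbf z}q_\phi(\mathbf z)\nabla_\phi\log q_\phi(\mathbf z).
\]
The second sum equals $\nabla_\phi\sum_{\mathbf z}q_\phi(\mathbf z)=\nabla_\phi 1=0$. For the first sum, the log-derivative trick $\nabla_\phi q_\phi=q_\phi\nabla_\phi\log q_\phi$ rewrites it as $\E_{q_\phi}[F(\mathbf z)\nabla_\phi\log q_\phi(\mathbf z)]$. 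This is exactly the expected $\phi$-gradient of the surrogate, because $F$ sits inside the stop-gradient and the first surrogate term has zero $\phi$-gradient. Multiplying by $1/t$ and taking the outer expectation then gives the stated identity.

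The main obstacle is justifying the claim that $\theta$ and $\phi$ decouple. In the real rollout, the gate features $r_i^{(s)}$ depend on $\theta$ and on previously sampled decisions, and $\mathcal L_{\mathrm{traj}}$ depends on $\mathbf z$ only through the committed contexts. I would make this precise by defining $q_\phi(\mathbf z)=\prod_s\prod_i(\cdots)$ via the chain rule of the sequential sampling, with its arguments treated as constants in $\theta$ (the stated convention). I would then verify that the proposition's "unbiased" claim refers to the gradient of $\mathcal L_{\mathrm{train}}$ under this same convention; otherwise an extra term $\E[F\,\nabla_\theta\log q]$ would appear. The remaining technical step is the interchange of $\nabla$ and $\E$ over $t$, which I would handle by dominated convergence.
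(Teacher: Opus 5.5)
Your proposal is correct and follows the paper's proof essentially step for step. The $\theta$-gradient is taken under a fixed-policy convention, which the paper formalizes with a reference backbone $\bar\theta$ for the gate features, held fixed when differentiating; this is exactly what removes the extra $\E[F\,\nabla_\theta\log q_\phi]$ term you flag. The $\phi$-gradient then follows from the product rule together with the zero-mean score identity, and the result is lifted to $\mathcal L_{\mathrm{train}}$ by the tower property, assuming gradient and outer expectation can be interchanged.
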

The proof is provided in Appendix~\ref{app:policy-gradient-proof}. 
In \cref{eq:one-schedule-surrogate}, the first term $\mathcal{L}_{\mathrm{traj}}$ trains the denoiser to predict target tokens. The second term utilizes the score-function estimator to train the halting policy. 
The detached cost $\mathcal{L}_{\mathrm{traj}}(\mathbf{z})+\log\frac{q_\phi(\mathbf{z})}{\pi(\mathbf{z})}$ provides an outcome signal for the sampled schedule, giving each halting decision credit for its effect on the sequence as a whole.
This encourages the policy to favor schedules that achieve high prediction accuracy while remaining close to the prior distribution.

\textbf{Practical regularization.}
To maintain token-wise adaptivity, we relax the joint KL penalty term $\log\frac{q_\phi(\mathbf{z})}{\pi(\mathbf{z})}$. Specifically, we regularize the average depth distribution across the minibatch toward the geometric prior, while applying a weaker penalty to the variations among individual token depths. We defer the complete formulation to Appendix~\ref{app:unit-weight-geometric-kl}.

\textbf{Variance reduction.}
\cref{eq:one-schedule-surrogate} directly supervises each token at a
single sampled depth, although the same rollout produces predictions at
every preceding pass. We reuse these predictions to give earlier passes
a direct learning signal, even when a token exits later. For token $i$,
we average the per-depth cross-entropy losses as
$\sum_{s=1}^{z_i}\stopgrad[\omega_i^{(s)}]\ell_i^{(s)}$, where
$\omega_i^{(s)}$ is the predicted exit-depth probability after assigning
all mass beyond the sampled exit $z_i$ to $z_i$, so that
$\sum_{s=1}^{z_i}\omega_i^{(s)}=1$. 
This requires no additional recurrent
passes and empirically reduces denoiser gradient variance
(Figure~\ref{fig:ablations}, right). 
Furthermore, we subtract the detached first-pass prediction cost $\ell_i^{(1)}$ as a \emph{control variate}~\citep{williams1992simple}.
The full formulation of the weighting scheme and the baseline derivation is given in Appendix~\ref{app:intermediate-loss-proof}, and the complete training surrogate is
\cref{eq:implemented-training-surrogate}.

\begin{table}[!t]
\centering
\caption{\textbf{Main results.} We compare \modelname{} against its corresponding AR baseline and recent diffusion language models (DLMs). Within each model scale, the overall best results are in \textbf{bold}, and the best non-AR results are
\underline{underlined}. Columns corresponding to our model are shaded in
blue.}
\vspace{-5pt}
\label{tab:main-results}
\begingroup
\setlength{\tabcolsep}{1pt}
\renewcommand{\arraystretch}{1.0}
\small
\newlength{\modelheadwidth}
\newcommand{\modelhead}[2]{{\scriptsize
  \settowidth{\modelheadwidth}{#1-#2}%
  \makebox[\modelheadwidth][c]{\shortstack{#1\\#2}}}}
\newcommand{\modelgroup}[1]{{\scriptsize\shortstack{\textbf{#1
  Baseline}\\\textit{Instruct Model}}}}
\newcommand{\oursgroup}{{\scriptsize\shortstack{\textbf{Ours}\\\textit{Instruct
  Model}}}}
\resizebox{\textwidth}{!}{%
\begin{tabular}{@{}lccO@{\hspace{3pt}{\color{black!35}\vrule width
  0.45pt}\hspace{3pt}}ccccccO@{}}
\toprule
\multirow{2}{*}{\textbf{Benchmark}}
& \multicolumn{1}{c}{\modelgroup{AR}}
& \multicolumn{1}{c}{\modelgroup{DLM}}
& \multicolumn{1}{c}{\oursgroup}
& \multicolumn{1}{c}{\modelgroup{AR}}
& \multicolumn{5}{c}{\modelgroup{DLM}}
& \multicolumn{1}{c}{\oursgroup} \\
\cmidrule(lr){2-2}\cmidrule(lr){3-3}\cmidrule(lr){4-4}
\cmidrule(lr){5-5}\cmidrule(lr){6-10}\cmidrule(lr){11-11}
& \modelhead{Qwen3}{1.7B}
& \modelhead{SDAR}{1.7B}
& \modelhead{\textbf{\modelname{}}}{\textbf{1.7B}}
& \modelhead{Qwen3}{8B}
& \modelhead{LLaDA}{8B}
& \modelhead{Dream}{7B}
& \modelhead{Fast-dLLM-v2}{7B}
& \modelhead{SDAR}{8B}
& \modelhead{WeDLM}{8B}
& \modelhead{\textbf{\modelname{}}}{\textbf{8B}} \\[2pt]
\midrule
\multicolumn{11}{c}{General Reasoning} \\
\midrule
ARC-C
  & \textbf{82.4} & 74.9 & \underline{77.7}
  & 93.9 & 85.6 & 84.3 & 77.2
  & 90.0 & 91.9 & \textbf{\underline{94.4}} \\
ARC-E
  & \textbf{90.0} & 86.5 & \underline{88.7}
  & 96.1 & 92.6 & 93.0 & 83.4
  & 93.4 & 97.5 & \textbf{\underline{98.1}} \\
MMLU
  & 60.3 & \textbf{\underline{63.4}} & 59.4
  & 76.6 & 62.4 & 68.4 & 66.7
  & \textbf{\underline{78.5}} & 78.0 & 76.6 \\
MMLU-Pro
  & 39.0 & 37.0 & \textbf{\underline{39.6}}
  & 56.8 & 35.6 & 42.0 & 40.6
  & 56.3 & 58.3 & \textbf{\underline{63.9}} \\
\midrule
\multicolumn{11}{c}{Math \& Science} \\
\midrule
GSM8K
  & 83.2 & 80.1 & \textbf{\underline{85.5}}
  & 93.6 & 73.8 & 82.0 & 85.1
  & 91.4 & 93.3 & \textbf{\underline{94.2}} \\
MATH-500
  & \textbf{72.0} & \underline{62.4} & 61.8
  & \textbf{81.8} & 42.2 & 42.0 & 58.2
  & 77.0 & 77.8 & \underline{80.8} \\
GPQA-Diamond
  & 31.3 & 32.3 & \textbf{\underline{41.4}}
  & 47.0 & 22.7 & 23.7 & 21.2
  & 38.4 & 37.4 & \textbf{\underline{49.5}} \\
\midrule
\multicolumn{11}{c}{Code Generation} \\
\midrule
MBPP
  & 61.9 & 60.3 & \textbf{\underline{65.5}}
  & 79.0 & 46.0 & 65.5 & 61.9
  & 72.0 & 74.3 & \textbf{\underline{81.5}} \\
MBPP$^+$
  & 58.5 & 59.4 & \textbf{\underline{60.1}}
  & \textbf{74.5} & 43.4 & 60.6 & 50.0
  & 67.9 & 63.5 & \underline{72.5} \\
HumanEval
  & 62.8 & 61.6 & \textbf{\underline{72.6}}
  & 85.4 & 43.3 & 53.7 & 65.9
  & 78.0 & 79.9 & \textbf{\underline{87.8}} \\
HumanEval$^+$
  & 60.4 & 53.0 & \textbf{\underline{67.7}}
  & 79.3 & 38.4 & 50.0 & 61.0
  & 73.2 & 74.4 & \textbf{\underline{84.2}} \\
\midrule
\textbf{Average}
  & 63.8 & 61.0 & \textbf{\underline{65.5}}
  & 78.5 & 53.3 & 60.5 & 61.0
  & 74.2 & 75.1 & \textbf{\underline{80.3}} \\
\bottomrule
\end{tabular}%
}
\endgroup
\vspace{-10pt}
\end{table}

\begin{figure}[!t]
  \centering
  \includegraphics[width=\textwidth]{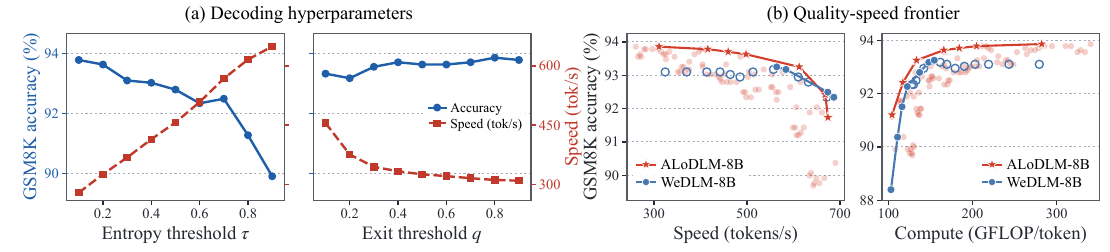}
  \vspace{-15pt}
  \caption{\textbf{(a) Complementary roles of the entropy and halting gates.} Increasing $\tau$ relaxes the commitment criterion and accelerates
  generation, with sharper accuracy losses at high thresholds (left). Increasing $q$ permits deeper latent refinement, also yielding accuracy
  gains at lower throughput (right).
  These gates collectively balance parallel token commitment with adaptive refinement of unresolved tokens.
  \textbf{(b) Adaptive recurrence improves the quality--efficiency frontier.}
  Across most of the evaluated range, \modelname{}-8B achieves higher accuracy
  than WeDLM-8B at comparable throughput (left) or compute per generated token
  (right). Solid lines connect frontier configurations.}
  \vspace{-10pt}
 \label{fig:speed-pareto}
\end{figure}

\begin{figure}[t]
  \centering
  \includegraphics[width=\textwidth]{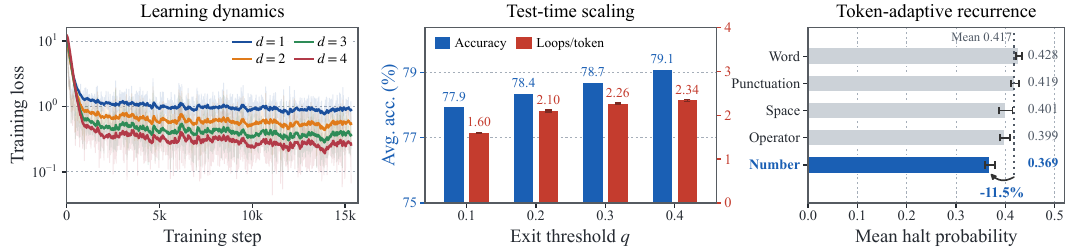}
  \vspace{-15pt}
    \caption{
    \textbf{Left: Learning dynamics.} After the initial training phase,
    deeper recurrent readouts exhibit lower training loss on their active tokens.
    \textbf{Middle: Test-time scaling with $\mathbf{q}$.} Increasing $q$ raises loops-per-token and the average score across eleven benchmarks.
    \textbf{Right: Token-adaptive recurrence.} Mean first-pass halting
    probabilities differ across token categories, with numerical tokens
    showing the lowest mean. These preferences are learned without
    explicit token-difficulty labels or predefined halting targets.}
  \vspace{-15pt}
  \label{fig:training-allocation}
\end{figure}

\section{Experiments}
\label{sec:experiments}

We first describe the experimental setup (\S\ref{sec:evaluation-setup}),
then evaluate generation quality (\S\ref{sec:performance-evaluation}), 
inference speed and computational cost (\S\ref{sec:speed-evaluation}),
followed by analyses and ablations (\S\ref{sec:ablation-studies}).

\subsection{Evaluation Setup}
\label{sec:evaluation-setup}

\textbf{Training details.}
We convert Qwen3-1.7B and Qwen3-8B~\citep{yang2025qwen3} into
\modelname{}-1.7B and \modelname{}-8B, respectively, using WeDLM's
streaming block-diffusion framework~\citep{liu2025wedlm}, which enables
full-sequence conditioning with a standard causal attention mask.
Both models use a maximum recurrent depth of $K=4$.
The recurrent core comprises all 28 Transformer layers at 1.7B
and the middle 16 layers at 8B.
Unlike prior conversion pipelines combining continued pretraining
(CPT) and supervised fine-tuning (SFT)~\citep{tian2025next,liu2025wedlm,cheng2025sdar},
we skip CPT and directly perform SFT on a 5B-token corpus using
AdamW~\citep{loshchilov2017decoupled} with learning rate $10^{-5}$
and weight decay $0.01$.
We defer the full details to Appendix~\ref{app:training-details}.

\textbf{Benchmarks.}
We evaluate general knowledge and scientific reasoning on ARC-Easy and ARC-Challenge~\citep{clark2018arc}, MMLU~\citep{hendrycks2021mmlu}, MMLU-Pro~\citep{wang2024mmlupro}, and GPQA-Diamond~\citep{rein2023gpqa}; mathematical reasoning on GSM8K~\citep{cobbe2021gsm8k} and MATH-500~\citep{hendrycks2021math}; and code generation on HumanEval~\citep{chen2021humaneval}, sanitized MBPP~\citep{austin2021mbpp}, and their EvalPlus variants~\citep{liu2023evalplus}. We report accuracy or exact match for non-code tasks, and execution-based pass@1 for coding benchmarks. Following SDAR~\citep{cheng2025sdar}, we adopt the OpenCompass evaluation protocol~\citep{2023opencompass}, utilizing each model's native chat template, greedy decoding, and a 4,096-token generation limit across all tasks. To ensure a fair comparison in Table~\ref{tab:main-results}, we constrain all methods to generate exactly one token per step according to their respective default decoding rules (\emph{e.g.,} selecting the token with the lowest entropy). Full details are provided in Appendix~\ref{app:evaluation-protocols}.

\textbf{Baselines.}
We compare \modelname{} against both AR baselines and recent diffusion language models. For block diffusion baselines, we mainly focus on models converted from Qwen family of models to ensure fair comparison.
Our main comparison includes Qwen3~\citep{yang2025qwen3},
Dream~\citep{ye2025dream},
LLaDA~\citep{nie2025llada}, Fast-dLLM-v2~\citep{wu2025fastdllmv2}, SDAR~\citep{cheng2025sdar}, and
WeDLM~\citep{liu2025wedlm} at the corresponding parameter scales.  To ensure fair comparison, each model uses its optimized high-speed inference
framework: LLaDA uses dInfer~\citep{ma2025dinfer}, SDAR uses JetEngine, and the WeDLM and Qwen models use
vLLM~\citep{zheng2025vllm}. Our \modelname{} is also served via vLLM using
depth-aware KV caching~\citep{elbayad2019depth,geiping2025recurrent,zhu2025ouro}: for each
prefix token, we use KV states at the same recurrent depth when available,
and otherwise use its deepest available states (see Appendix~\ref{app:depth-kv}
for details).

\subsection{Performance Evaluation}
\label{sec:performance-evaluation}
Table~\ref{tab:main-results} compares \modelname{} against prior methods across eleven benchmarks. The results demonstrate that \modelname{} consistently surpasses all evaluated DLMs, even outperforming the strong AR baselines.
At both the 1.7B and 8B parameter scales, \modelname{} establishes new state-of-the-art averages of 65.5 and 80.3, respectively. \modelname{}-8B outperforms the strong DLM baseline WeDLM-8B by an average of 5.2 points, leading on ten of the eleven tasks. Notably, \modelname{} achieves this performance through direct supervised fine-tuning (SFT) on a modest 5B-token corpus, bypassing the expensive continued-pretraining phase required by SDAR and WeDLM.
Compared to the strong Qwen3 AR models, \modelname{} presents consistent improvements on both 1.7B (63.8 vs. 65.5) and 8B scales (78.5 vs. 80.3). The model shows particularly strong gains in code generation, sweeping all four coding benchmarks over all baselines (except slightly behind the AR models on MBPP+). This advantage may be attributed to its looped architecture, aligning with the findings of \citet{wang2026smelt}.

\begin{figure}[t]
  \centering
  \includegraphics[width=\textwidth]{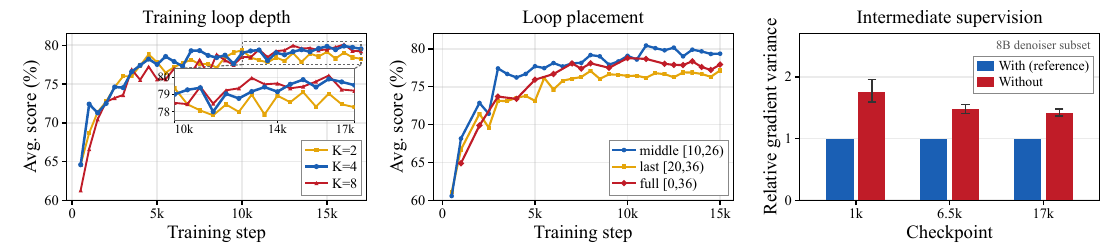}
  \vspace{-15pt}
  \caption{
  \textbf{Left:} At 8B, $K=4$ outperforms $K=8$ early in training
  and achieves comparable performance later, while $K=2$ plateaus
  at a lower score. The inset enlarges steps 10{,}000--17{,}000.
  \textbf{Middle:} Middle-layer recurrence outperforms last-layer
  recurrence later in training. The full-stack curve is a historical
  reference evaluated on a different benchmark subset.
  \textbf{Right:} Intermediate supervision reduces conditional
  gradient variance in \modelname{}-8B. Variance across sampled
  exit trajectories is normalized to the value with intermediate
  supervision at each checkpoint. Error bars show paired 95\%
  bootstrap confidence intervals across eight fixed input packs.
  }
  \label{fig:ablations}
  \vspace{-10pt}
\end{figure}

\subsection{Quality--Efficiency Trade-off}
\label{sec:speed-evaluation}

\textbf{Decoding hyperparameters.}
Figure~\ref{fig:speed-pareto}(a) illustrates how the entropy threshold
$\tau$ and exit threshold $q$ balance parallel token commitment
and latent refinement in \modelname{}-8B on GSM8K.
At fixed $q=0.5$, increasing $\tau$ from 0.1 to 0.6 relaxes the
commitment criterion and raises throughput from 278.7 to 508.3
tokens/s, while accuracy decreases from 93.8\% to 92.3\%.
More aggressive commitment incurs a larger quality penalty,
with accuracy falling to 89.9\% at $\tau=0.9$.
The exit threshold $q$ controls how long unresolved tokens continue
to refine their latent states. At fixed $\tau=0.2$, increasing $q$
from 0.1 to 0.9 raises accuracy from 93.3\% to 93.8\%, while
throughput decreases from 455.3 to 309.0 tokens/s.
Together, these controls allow confident predictions to become
discrete context promptly while reserving further recurrent
computation for unresolved tokens.

\textbf{Quality--efficiency frontier.}
Figure~\ref{fig:speed-pareto}(b) compares \modelname{}-8B with the
strong WeDLM-8B baseline under single-stream inference on one
NVIDIA B200. The evaluation covers 94 $(q,\tau)$ settings for
\modelname{} and 21 entropy-threshold settings for WeDLM.
The accuracy--throughput frontiers cross near 650 tokens/s:
\modelname{} offers better trade-offs in the higher-accuracy regime,
while WeDLM leads at the fastest displayed operating points.
At a matched accuracy of 93.25\%, optimized \modelname{}-8B reaches
612.4 tokens/s versus 564.2 for WeDLM, an 8.5\% throughput gain.

Throughput depends on both arithmetic complexity and execution efficiency (e.g., kernel performance, memory traffic, and runtime overhead). To isolate and evaluate arithmetic efficiency, we compare the estimated GFLOPs per generated token (Figure~\ref{fig:speed-pareto}(b), right). \modelname{} maintains higher accuracy across the entire evaluated compute range. At the 93.25\% accuracy threshold, it requires only 133.5 GFLOPs per token compared to 154.6 for WeDLM, a 13.6\% reduction. These results confirm that adaptive latent recurrence improves generation quality per unit of computation while maintaining competitive parallel decoding speeds.

\subsection{Analysis and Ablations}
\label{sec:ablation-studies}

\textbf{Test-time scaling.}
Figure~\ref{fig:training-allocation} (left) shows that, after the
initial training phase, deeper recurrent readouts in \modelname{}-8B
exhibit lower training loss on their active tokens, consistent
with progressive refinement.
At inference time, increasing the exit threshold $q$ raises the
average number of loops per token from $1.6$ to $2.34$ and the
average score across eleven benchmarks from $77.9\%$ to $79.1\%$
(Figure~\ref{fig:training-allocation}, middle).
These results validate $q$ as a control for average recurrent depth
and demonstrate the effectiveness of latent recurrence as an axis
for test-time scaling.

\textbf{Token-adaptive recurrence.}
Figure~\ref{fig:training-allocation} (right) compares mean first-pass
halt probabilities across token types in \modelname{}-8B, averaged
equally over GSM8K, MATH-500, MBPP-sanitized, and HumanEval.
Numerical tokens have the lowest mean halt probability ($0.369$),
approximately $11.5\%$ below the cross-dataset reference mean
($0.417$), while word tokens have the highest ($0.428$).
This lower halting propensity indicates a stronger learned preference
for further latent refinement at numerical positions.
Such a preference is consistent with the importance of precise
numerical predictions in mathematical reasoning and code generation,
where small errors in values can alter the final result.
These observations support our approach to mitigating the
\textit{computation--difficulty mismatch}: learned token-dependent
halting preferences can guide additional computation toward
predictions requiring further refinement.
This adaptive behavior emerges from jointly learning token prediction
and the halting policy, without explicit difficulty labels or
predefined halting targets for different token categories.

\textbf{Maximum recurrent depth.}
Figure~\ref{fig:ablations} (left) compares \modelname{}-8B trained
with $K\in\{2,4,8\}$ under matched settings, reporting the mean
score across six benchmarks.
At matched training steps, $K=2$ and $K=4$ improve faster initially
than $K=8$, but $K=2$ plateaus at a lower score.
With half the maximum recurrent depth, $K=4$ achieves comparable
performance to $K=8$ later in training, motivating our choice
of $K=4$.

\textbf{Loop placement.}
Prior looped models vary in the extent and placement of
recurrence~\citep{geiping2025recurrent,zhu2025ouro,lee2026loopmdm}.
Figure~\ref{fig:ablations} (middle) compares recurrence over the
middle layers $[10,26)$ and last layers $[20,36)$ of Qwen3-8B,
with the full-stack trajectory $[0,36)$ included as a historical
reference.
Middle-layer recurrence outperforms last-layer recurrence later
in training, although both cores contain 16 layers.
This result suggests that effective latent refinement depends
on where recurrence is applied, motivating the middle-layer core
of \modelname{}-8B.

\textbf{Variance reduction.}
Figure~\ref{fig:ablations} (right) evaluates the effect of intermediate
denoiser supervision on gradient noise in \modelname{}-8B.
Using fixed corrupted inputs and paired exit trajectories, we compare
weighted supervision over pre-commitment readouts with supervision
at the sampled exit alone.
For the measured subset of 4{,}096 denoiser normalization parameters,
removing intermediate supervision increases conditional gradient
variance to $1.76\times$, $1.49\times$, and $1.42\times$ the
full-method value at training steps 1{,}000, 6{,}500, and 17{,}000,
respectively.
These results justify reusing intermediate predictions to reduce
noise from exit-trajectory sampling, providing a less noisy denoiser
training signal for latent refinement without additional recurrent
forward passes.

\FloatBarrier
\section{Limitations}
\label{sec:limitations}
First, \modelname{} can exhibit a longer Time to First Token than a comparable AR model, as the KV-cache is constructed for the full recurrent depth during the prefill phase. This computational overhead can diminish the benefits of parallel decoding for short responses or latency-sensitive applications. Second, generation speed is inherently input-dependent. Because the amount of recurrent computation and the number of tokens committed simultaneously rely on token confidence and adaptive halting decisions, throughput can vary significantly across prompts, datasets, and domains—even under greedy decoding. Consequently, on domains underrepresented in the training data, the model may require additional recurrent refinement and make fewer parallel commitments, potentially reducing or entirely reversing its speed advantage over an optimized AR baseline.

\section{Conclusion}
\label{sec:conclusion}
We introduced \modelname{}, a family of adaptively looped diffusion
language models that address the \textit{computation--difficulty mismatch}
through token-adaptive latent recurrence. Within each denoising step,
confident tokens commit early to provide discrete context, while
unresolved tokens retain and refine their latent states across recurrent
passes. We derived a conditional ELBO over latent token-wise exit
schedules to guide computation allocation through sequence-level credit
assignment, and incorporated intermediate supervision to support
denoiser training. Across eleven benchmarks, \modelname{} outperforms
all evaluated diffusion models and the corresponding Qwen3 baselines
in average score at both 1.7B and 8B scales. With optimized inference,
\modelname{}-8B delivers approximately $2.7\times$ the throughput of
vLLM-served Qwen3-8B at comparable accuracy on GSM8K.

\subsection*{AI use statement}
We used generative AI tools to assist with developing and checking
mathematical derivations and proofs, implementing methods, revising
the manuscript, and creating and modifying scientific figures.
We reviewed all AI-assisted material, verified the mathematical
arguments and implementation, and take responsibility for the
final text, claims, and artifacts.

\begingroup
\let\originalthebibliography\thebibliography
\renewcommand{\thebibliography}[1]{%
  \originalthebibliography{#1}\fussy\setlength{\emergencystretch}{2em}}
\bibliography{iclr2027_conference}
\bibliographystyle{iclr2027_conference}
\endgroup

\clearpage
\appendix
\addtocontents{toc}{\protect\setcounter{tocdepth}{2}}

\begingroup
  \renewcommand{\contentsname}{Appendix Contents}
  \hypersetup{linktoc=all}
  \pdfbookmark[1]{Appendix Contents}{appendix-contents}
  \tableofcontents
\endgroup

\newpage
\section{Additional Related Work}
\label{sec:related-work}

\textbf{Discrete Diffusion Language Models} generally fall into two families based on their corruption processes: masked DLMs and uniform DLMs~\citep{austin2021structured}. During the reverse process, standard masked diffusion fixes tokens once unmasked, whereas uniform diffusion allows for continuous revision across subsequent denoising steps. While most established DLM rely on masked diffusion~\citep{sahoo2024mdlm,nie2025llada,ye2025dream,song2025seed,arriola2025block,cheng2025sdar,wu2025fastdllmv2,liu2025wedlm,lee2026loopmdm,carballocastro2026recursive,hersche2026softmasked,miele2026fixedpoint}, recent successes like DiffusionGemma and Duo highlight the growing viability of uniform DLMs~\citep{team2026diffusiongemma,sahoo2025duo}. However, despite their parallel generation capabilities, masked DLMs continue to face a quality gap when compared to strong autoregressive (AR) models~\citep{arriola2025block,Liu2025TiDARTI}.
To bridge this gap, recent work mainly focuses on two trajectories. The first line of research reintroduces AR structure into the diffusion process. For example, Block Diffusion generates sequences in autoregressive blocks while denoising tokens within each block in parallel~\citep{arriola2025block}—a blockwise formulation that Fast-dLLM v2~\citep{wu2025fastdllmv2} and SDAR~\citep{cheng2025sdar} extend to pretrained AR models. Other approaches integrate AR and diffusion more deeply: WeDLM~\citep{liu2025wedlm} employs topological reordering to merge parallel mask recovery with standard causal attention and streaming prefix commitment, while Nemotron-Labs-Diffusion~\citep{Fu2026NemotronLabsDiffusionAT} jointly trains on AR and block-diffusion objectives to support standard generation, diffusion, and self-speculative decoding within a single architecture.
A second line of research exploits diffusion's parallelism to draft candidates for subsequent AR verification. In single-pass architectures, TiDAR~\citep{Liu2025TiDARTI} executes diffusion drafting and AR verification simultaneously, while I-DLM~\citep{yu2026introspective} uses introspective consistency training and strided decoding to verify past proposals while generating new ones concurrently. Other methods decouple the drafting process: DFlash~\citep{Chen2026DFlashBD} employs a separate, lightweight block-diffusion drafter conditioned on the target AR model's context features, whereas Uno~\citep{sahoo2026unlocking} learns parallel diffusion adapters atop frozen AR weights, utilizing rejection sampling against the AR distribution to achieve lossless acceleration.

\textbf{Looped Transformers} reuse shared architectural blocks to increase effective depth without expanding their parameter count, enabling strong algorithmic computation and parameter-efficient in-context learning~\citep{giannou2023looped,yang2024looped,saunshi2025latent}. Recent architectural advances have enriched this paradigm by integrating recurrence with mixture-of-experts~\citep{xue2022go,gao2026loop}, efficient caching mechanisms~\citep{sun2026universal}, agentic post-training~\citep{lab2026nanbeige4}, and variable-depth latent reasoning~\citep{geiping2025recurrent,jeddi2026loopformer}. These models also exhibit unique scaling properties: virtual-depth scaling enhances reasoning capabilities with minimal impact on knowledge capacity~\citep{zhu2025beyond}, while SMELT demonstrates significant compute-efficiency gains for looped MoE architectures under matched resource budgets~\citep{wang2026smelt}. To further optimize these models, Parcae stabilizes recurrence and establishes training scaling laws that jointly optimize loops and data, alongside a saturating inference scaling law~\citep{prairie2026parcae}. Additionally, \citet{chen2026scalingexponents} demonstrate that depth growth and boundary operators can improve scaling exponents, with static looping serving as an effective regularizer for multi-epoch training on limited data.
To support adaptive computation within these loops, models such as ACT and Universal Transformers learn halting mechanisms using weighted intermediate states or outputs~\citep{graves2016adaptive,dehghani2019universal}, whereas Mixture-of-Recursions dynamically routes tokens to varying recurrent depths~\citep{bae2025mixture}. Other approaches focus on stabilizing deep recurrence through fixed-point reasoners~\citep{movahedi2026fixed} or by learning solution attractors~\citep{huang2026equilibrium}. To train these adaptive halting mechanisms, frameworks like PonderNet and Ouro explicitly sum prediction losses across candidate exit depths using learned halting probabilities~\citep{banino2021pondernet,zhu2025ouro}. However, this approach presents a critical challenge for \modelname{}: because early token commitments inherently alter the context for subsequent, unresolved positions, computing an exact summation over all joint commitment schedules becomes computationally intractable.

\textbf{Self-conditioning} is a technique that improves diffusion sample quality by feeding the denoiser its own previous clean-data estimate as an auxiliary input. Originally introduced in Analog Bits~\citep{chen2023analogbits}, the standard training recipe constructs this input via an additional forward pass and applies a stop-gradient to the resulting estimate. During sampling, the estimate from the preceding denoising step is reused, creating a recursive feedback loop that indirectly propagates information from earlier steps. Subsequent work has adapted this paradigm in various ways. Simple Self-Conditioning~\citep{cardei2026selfconditioning} applies detached prediction feedback directly to masked diffusion, whereas DiffusionGemma~\citep{team2026diffusiongemma} maps predicted probabilities into continuous conditioning features. Other related approaches modify the feedback mechanism itself: Soft-Masked DLMs~\citep{hersche2026softmasked} blend mask embeddings with confidence-weighted top-$k$ token embeddings, while DMax~\citep{chen2026dmax} pairs soft embedding feedback with on-policy training. State reuse has also been extended beyond pure predictions. For instance, Loopholing~\citep{jo2026loopholing} propagates hidden states across denoising steps using a two-pass self-conditioning training scheme, and CoFRe~\citep{miele2026fixedpoint} reuses latent states to initialize successive fixed-point solvers.In contrast to conventional self-conditioning methods that rely on detached prediction feedback across denoising steps, \modelname{} maintains persistent hidden states across recurrent passes within each denoising step. By backpropagating through these recurrent passes, \modelname{} explicitly trains earlier states to support future predictions, utilizing a conditional NELBO to jointly learn both the denoiser and a token-wise halting policy.

\section{Additional Method Details}
\label{app:method-details}

This section describes depth-aware KV caching and the computational cost
of recurrent inference.

\subsection{Depth-aware KV caching}
\label{app:depth-kv}

Weight sharing does not imply that attention keys and values are
identical across recurrent passes. The same physical layer receives
different hidden states at different depths. We therefore distinguish
a layer index $\ell$ from its recurrent invocation $d$ and maintain
depth-specific logical buffers for the recurrent core:
\begin{equation}
    \{K_\ell^{(d)},V_\ell^{(d)}\},
    \qquad
    \ell\in[L_{\mathrm{start}},L_{\mathrm{end}}),
    \quad d\in\{1,\ldots,K\}.
    \label{eq:depth-kv}
\end{equation}
Prefill supplies prompt context at the required depths. During
decoding, a depth-$d$ attention invocation uses prefix entries
computed at depth $d$ when available, using the original sequence positions and the
base model's attention mask.

\textbf{Available depth and commitment depth.}
Let $D_{\ell,i}$ be the deepest computed cache depth for prefix
position $i$ at layer $\ell$. When available depths form the prefix
$1,\ldots,D_{\ell,i}$, the matching-depth lookup with
deepest-available fallback reads
\begin{equation}
    \bigl(K_{\ell,i}^{(\min(d,D_{\ell,i}))},
          V_{\ell,i}^{(\min(d,D_{\ell,i}))}\bigr).
    \label{eq:kv-cap}
\end{equation}
The available cache depth need not equal the token's commitment
depth. In a dense recurrent pass, an already committed position can
still be processed as embedding context for unresolved positions;
the depth-specific entries produced by that computation are valid
and should be retained. Reusing an earlier entry is necessary only
when the matching entry is absent. In particular, embedding feedback
should not be replaced by freezing every committed position's
pre-commitment latent cache.

\textbf{Coda readouts.}
For middle-layer recurrence, every pass branches the recurrent carry
through the coda to obtain its prediction and gate readout. The
next recurrent pass continues from the pre-coda carry. The serving
implementation allocates separate depth buffers to recurrent layers,
while prelude and coda layers use one buffer per physical layer.
Coda entries for the active window are refreshed by each readout
branch, so cached coda context uses the latest evaluated branch.
This storage convention is distinct from allocating a separate
cache for every coda invocation.

\textbf{Memory and interpretation.}
For $L$ total layers and $m=L_{\mathrm{end}}-L_{\mathrm{start}}$
recurrent layers, this layout has $L+(K-1)m$ logical KV-buffer
pairs. At 8B, $L=36$, $m=16$, and $K=4$ give 84 pairs, of which
64 belong to the recurrent core. The fallback and shared coda
buffers are serving approximations: their outputs need not match
those obtained by recomputing every prefix token at every depth. Related
early-exit and recurrent models face the same need to provide
attention context when computation depths differ
\citep{elbayad2019depth,geiping2025recurrent,zhu2025ouro}.

\subsection{Accounting for recurrent computation}
\label{app:compute-accounting}

Layer evaluations clarify the cost of a middle-layer recurrent core.
With $S$ executed passes, the prelude runs once, whereas both the
core and the coda readout run at every pass:
\begin{equation}
    C_{\mathrm{layers}}(S)
      =L_{\mathrm{start}}
       +S(L_{\mathrm{end}}-L_{\mathrm{start}})
       +S(L-L_{\mathrm{end}}).
    \label{eq:middle-loop-cost}
\end{equation}
For the 8B partition $[10,26)$, the full four-pass path evaluates
$10+4\cdot16+4\cdot10=114$ Transformer layers, compared with
$4\cdot36=144$ for four full-stack passes. The 1.7B configuration
repeats all 28 layers, giving $28S$ evaluations.

These are per-position layer counts for a dense execution path, not
FLOP counts per generated token. A window with unresolved positions can be processed
several times, and a pass can commit several tokens. Moreover,
positions already committed within the window can still incur
computation as context. With batching, the shared execution can run
to the largest stopping depth among the active sequences. We
therefore measure serving cost from executed operations and
generated-token counts, rather than inferring speed from the
distribution of token commitment depths alone.

\section{Training Objective and Gradient Derivations}
\label{app:derivations}

We give the variational-bound and gradient-estimator proofs, followed by
the relaxed depth regularizer and variance-reduction details deferred
from Section~\ref{sec:training}.

\subsection{Schedule-conditioned likelihood and variational bound}
\label{app:joint-elbo-proof}

Fix a corrupted example $\mathbf x_t$ with $t>0$ and masked positions
$\mathcal M_t$. For a schedule $\mathbf z$, let
$B_s(\mathbf z)=\{i\in\mathcal M_t:z_i=s\}$ and
$G_{<s}(\mathbf z)=\bigcup_{r<s}B_r(\mathbf z)$.
The readouts at pass $s$ are computed before inserting the targets
of tokens in $B_s$. They may depend on earlier committed targets
$\mathbf y_{G_{<s}}$, which are fed back during likelihood evaluation,
but not on the targets of tokens that remain active. Thus
\begin{equation}
\begin{aligned}
    p_\theta(\mathbf y_{\mathcal M_t}\mid\mathbf x_t,\mathbf z)
    &=\prod_{s=1}^{K}\prod_{i\in B_s(\mathbf z)}
       p_{\theta,i}^{(s)}\!\left(
          y_i\mid\mathbf x_t,\mathbf y_{G_{<s}(\mathbf z)};\mathbf z
       \right)\\
    &=\exp[-\mathcal L_{\mathrm{traj}}(\mathbf z)].
\end{aligned}
    \label{eq:schedule-conditioned-likelihood}
\end{equation}
Each readout is a normalized categorical distribution, and tokens
within an exit group are conditionally independent given earlier
groups under the parallel readout model. Summing over the last group's
targets gives one; repeating in reverse exit order proves that
\cref{eq:schedule-conditioned-likelihood} is normalized.
Using observed earlier targets is teacher forcing for this exit order;
generation from the same conditional model instead feeds back samples.

Let $\pi(\mathbf z)$ be a normalized prior independent of the unknown
masked targets. It may depend on the known mask set. Then
\cref{eq:schedule-marginal} defines a normalized conditional likelihood.
The variational distribution
$q_\phi(\mathbf z\mid\mathbf x_t,\mathbf y_{\mathcal M_t})$ may depend
on the observed targets; this does not change the generative model.

\begin{proposition}[Conditional negative ELBO]
\label{thm:latent-schedule-elbo}
For any $q_\phi$ supported on schedules with positive prior probability,
\begin{equation}
    -\log p_\theta(\mathbf y_{\mathcal M_t}\mid\mathbf x_t)
    \leq J
    :=\E_{q_\phi}[\mathcal L_{\mathrm{traj}}(\mathbf z)]
      +\KL(q_\phi\|\pi).
    \label{eq:joint-elbo}
\end{equation}
When these quantities are finite, the gap is
\begin{equation}
    J+\log p_\theta(\mathbf y_{\mathcal M_t}\mid\mathbf x_t)
    =\KL\!\left(
        q_\phi(\mathbf z\mid\mathbf x_t,\mathbf y_{\mathcal M_t})
        \,\|\,
        p_\theta(\mathbf z\mid\mathbf x_t,\mathbf y_{\mathcal M_t})
      \right).
    \label{eq:posterior-gap}
\end{equation}
The bound is tight exactly when $q_\phi$ equals the schedule posterior.
\end{proposition}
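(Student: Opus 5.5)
The plan is the standard variational identity. We work with a finite latent space: schedules $\mathbf z\in\{1,\ldots,K\}^{|\mathcal M_t|}$. By \cref{eq:schedule-conditioned-likelihood}, $p_\theta(\mathbf y_{\mathcal M_t}\mid\mathbf x_t,\mathbf z)=\exp[-\mathcal L_{\mathrm{traj}}(\mathbf z)]$. Combined with the normalized prior $\pi$, this defines the joint
\[
p_\theta(\mathbf y_{\mathcal M_t},\mathbf z\mid\mathbf x_t)=\pi(\mathbf z)\exp[-\mathcal L_{\mathrm{traj}}(\mathbf z)].
\]
Its marginal is \cref{eq:schedule-marginal}. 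The posterior is $p_\theta(\mathbf z\mid\mathbf x_t,\mathbf y_{\mathcal M_t})=\pi(\mathbf z)\exp[-\mathcal L_{\mathrm{traj}}(\mathbf z)]/p_\theta(\mathbf y_{\mathcal M_t}\mid\mathbf x_t)$. This is well defined whenever the marginal is positive, which holds if some schedule in the support of $q_\phi$ has finite trajectory loss. The finiteness assumption in the statement covers this case.

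\textbf{Exact gap identity.} First prove \cref{eq:posterior-gap}, then read off the bound. For any $\mathbf z$ with $q_\phi(\mathbf z)>0$, we have $\pi(\mathbf z)>0$ by the support hypothesis. Taking logs of the posterior gives
\[
\log q_\phi(\mathbf z)-\log p_\theta(\mathbf z\mid\mathbf x_t,\mathbf y_{\mathcal M_t})=\mathcal L_{\mathrm{traj}}(\mathbf z)+\log\frac{q_\phi(\mathbf z)}{\pi(\mathbf z)}+\log p_\theta(\mathbf y_{\mathcal M_t}\mid\mathbf x_t).
\]
Take the expectation under $q_\phi$. The left side becomes $\KL(q_\phi\|p_\theta(\cdot\mid\mathbf x_t,\mathbf y_{\mathcal M_t}))$. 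The right side becomes $\E_{q_\phi}[\mathcal L_{\mathrm{traj}}]+\KL(q_\phi\|\pi)+\log p_\theta(\mathbf y_{\mathcal M_t}\mid\mathbf x_t)$, which is $J+\log p_\theta$. This is exactly \cref{eq:posterior-gap}.

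\textbf{Bound and tightness.} Gibbs' inequality gives nonnegativity of the KL, which yields \cref{eq:joint-elbo}. Equality holds iff the KL vanishes, i.e. iff $q_\phi$ equals the posterior $q_\phi$-a.e. Since both are distributions on a finite set, this means they are equal as distributions.

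\textbf{Main obstacle.} The main care point is the edge cases, not the algebra.
\begin{itemize}[leftmargin=1.25em,itemsep=1pt,topsep=1pt]
\item If $\mathcal L_{\mathrm{traj}}(\mathbf z)=+\infty$ for some $\mathbf z$ in the support of $q_\phi$, then $J=+\infty$ and the inequality is trivial.
\item If $p_\theta(\mathbf y_{\mathcal M_t}\mid\mathbf x_t)=0$, the left side is $+\infty$. In this case every schedule with $\pi(\mathbf z)>0$ has zero likelihood, so $J=+\infty$ as well.
\end{itemize}
The gap identity is therefore stated only when the quantities are finite. In that regime, every $\mathbf z$ in the support of $q_\phi$ has positive joint probability, so every log above is finite and the expectations are finite sums.

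Alternatively, the inequality alone follows from Jensen's inequality applied to $\log\E_{q_\phi}[\pi e^{-\mathcal L_{\mathrm{traj}}}/q_\phi]$. The support condition is what makes this importance-weighted rewriting of the marginal exact: it ensures the sum over the support of $q_\phi$ captures all of the prior mass where needed. However, it does not by itself give the gap identity, so the direct computation above is the route to follow.
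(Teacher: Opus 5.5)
Your proof is correct and is essentially the paper's: the gap identity comes from the same Bayes-rule expansion of $\log\bigl(q_\phi(\mathbf z)/p_\theta(\mathbf z\mid\mathbf x_t,\mathbf y_{\mathcal M_t})\bigr)$. The only difference is that the paper gets the inequality directly from Jensen after restricting the marginal sum to $\mathrm{supp}\,q_\phi$, whereas you derive it from nonnegativity of that KL together with your edge-case split.

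One correction to your closing aside: the hypothesis $\mathrm{supp}\,q_\phi\subseteq\mathrm{supp}\,\pi$ does not make the importance-weighted rewriting exact. Restricting to $\mathrm{supp}\,q_\phi$ drops nonnegative terms and yields only a lower bound on the marginal, which is all the Jensen direction needs; what the hypothesis actually does is keep $\log\pi(\mathbf z)$, and hence $\KL(q_\phi\|\pi)$, finite.
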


\begin{proof}
Write $S_q$ for the support of $q_\phi$ and suppress conditioning
arguments. Restricting the marginal sum to $S_q$ and applying
Jensen's inequality gives
\begin{align}
    \log p_\theta(\mathbf y_{\mathcal M_t}\mid\mathbf x_t)
    &\geq\log\sum_{\mathbf z\in S_q}q_\phi(\mathbf z)
       \frac{\pi(\mathbf z)
       p_\theta(\mathbf y_{\mathcal M_t}\mid\mathbf x_t,\mathbf z)}
       {q_\phi(\mathbf z)}\nonumber\\
    &\geq\E_{q_\phi}\!\left[
        -\mathcal L_{\mathrm{traj}}(\mathbf z)
        +\log\pi(\mathbf z)-\log q_\phi(\mathbf z)
      \right]=-J.
    \label{eq:conditional-jensen-proof}
\end{align}
For the gap, Bayes' rule yields
\[
    \log\frac{q_\phi(\mathbf z)}
        {p_\theta(\mathbf z\mid\mathbf x_t,\mathbf y_{\mathcal M_t})}
    =\log\frac{q_\phi(\mathbf z)}{\pi(\mathbf z)}
      +\mathcal L_{\mathrm{traj}}(\mathbf z)
      +\log p_\theta(\mathbf y_{\mathcal M_t}\mid\mathbf x_t).
\]
Taking the $q_\phi$-expectation proves \cref{eq:posterior-gap};
nonnegativity of KL gives the equality condition.
\end{proof}

This result concerns the joint KL in \cref{eq:conditional-nelbo}.
Appendix~\ref{app:unit-weight-geometric-kl} describes its relaxation
and the coefficients that recover the exact KL.

\subsection{Single-trajectory gradient estimator}
\label{app:policy-gradient-proof}

\textbf{Schedule probability.}
Let $\mathcal F_s$ denote the history immediately before the exit
decisions at pass $s$, including preceding commitments and current
readouts. Active tokens exit independently conditional on
$\mathcal F_s$, with probabilities $\lambda_i^{(s)}$; any remaining
tokens exit deterministically at $K$. Define token $i$'s accumulated
decision log-probability by
\begin{equation}
\begin{aligned}
    u_i(\mathbf z)
    &=\sum_{s<z_i}\log(1-\lambda_i^{(s)})
       +\mathbf 1\{z_i<K\}\log\lambda_i^{(z_i)},\\
    \log q_\phi(\mathbf z\mid\mathbf x_t,\mathbf y_{\mathcal M_t})
    &=\sum_{i\in\mathcal M_t}u_i(\mathbf z).
\end{aligned}
    \label{eq:schedule-logprob}
\end{equation}
The hazards are evaluated along the realized history. Earlier exits
can change later hazards for other tokens, so this chain-rule identity
does not assume independent exit depths or identify $u_i$ with a
marginal log-probability. Normalized decisions and forced final exit
make $q_\phi$ a normalized distribution over the finite decision tree.
We assume differentiable probabilities on a fixed support, as obtained
with finite sigmoid logits.

\textbf{Derivative convention.}
To formalize the detached gate inputs in \cref{eq:hazard}, introduce
a reference backbone $\bar\theta$ for the policy and write
\begin{equation}
    J_{\bar\theta}(\theta,\phi)
    =\E_{q_{\bar\theta,\phi}}\!\left[
        \mathcal L_{\mathrm{traj},\theta}(\mathbf z)
        +\log\frac{q_{\bar\theta,\phi}(\mathbf z)}{\pi(\mathbf z)}
      \right].
    \label{eq:fixed-policy-objective}
\end{equation}
The forward pass uses $\bar\theta=\theta$, but the denoiser derivative
holds $\bar\theta$ fixed. Thus $\nabla_\theta J$ below denotes
$\partial_\theta J_{\bar\theta}|_{\bar\theta=\theta}$, rather than a
total derivative through the policy features. This notation requires
no second backbone. At fixed decisions, the trajectory loss depends
on $\theta$ but not on the separate gate parameters $\phi$; the prior
is fixed with respect to both parameter groups.

\begin{proof}[Proof of Proposition~\ref{thm:one-schedule-estimator}]
Fix $(\mathbf y,t,\mathbf x_t)$ and suppress the reference backbone
and conditioning arguments. Under the stated derivative convention,
\begin{equation}
    \nabla_\theta J
    =\sum_{\mathbf z}q_\phi(\mathbf z)
        \nabla_\theta\mathcal L_{\mathrm{traj}}(\mathbf z)
    =\E_{q_\phi}[\nabla_\theta\mathcal L_{\mathrm{traj}}].
    \label{eq:fixed-policy-denoiser-gradient}
\end{equation}
For the gate parameters, differentiation gives
\begin{align}
    \nabla_\phi J
    &=\sum_{\mathbf z}\nabla_\phi q_\phi(\mathbf z)
       \left[\mathcal L_{\mathrm{traj}}(\mathbf z)
         +\log\frac{q_\phi(\mathbf z)}{\pi(\mathbf z)}\right]
       +\sum_{\mathbf z}q_\phi(\mathbf z)
          \nabla_\phi\log q_\phi(\mathbf z)\nonumber\\
    &=\E_{q_\phi}\!\left[
        \left(\mathcal L_{\mathrm{traj}}+\log\frac{q_\phi}{\pi}\right)
        \nabla_\phi\log q_\phi
      \right],
    \label{eq:single-trajectory-gate-gradient}
\end{align}
where the last sum vanishes because
\begin{equation}
    \E_{q_\phi}[\nabla_\phi\log q_\phi]
    =\sum_{\mathbf z}\nabla_\phi q_\phi(\mathbf z)
    =\nabla_\phi 1=0.
    \label{eq:zero-mean-score}
\end{equation}
With the sampled decisions fixed and the cost detached,
\cref{eq:one-schedule-surrogate} has derivatives
\begin{equation}
\begin{aligned}
    \nabla_\theta\widehat{\mathcal L}_{\mathrm{sur}}
      &=\frac1t\nabla_\theta\mathcal L_{\mathrm{traj}},\\
    \nabla_\phi\widehat{\mathcal L}_{\mathrm{sur}}
      &=\frac1t\left(
          \mathcal L_{\mathrm{traj}}+\log\frac{q_\phi}{\pi}
        \right)\nabla_\phi\log q_\phi.
\end{aligned}
    \label{eq:single-surrogate-derivatives}
\end{equation}
Their schedule expectations are $(1/t)\nabla_{\theta,\phi}J$.
Let $\mu$ denote the parameter-independent law of the clean data,
corruption level, and corrupted input in \cref{eq:full-training-objective}.
Assuming the $1/t$-weighted gradients are integrable and differentiation
may pass through the outer expectation, the tower property gives
\begin{equation}
    \E_{\mu,q_\phi}
      [\nabla_{\theta,\phi}\widehat{\mathcal L}_{\mathrm{sur}}]
    =\E_\mu\!\left[\frac1t\nabla_{\theta,\phi}J\right]
    =\nabla_{\theta,\phi}\mathcal L_{\mathrm{train}}.
    \label{eq:data-averaged-gradient-proof}
\end{equation}
The denoiser derivative throughout retains the fixed-policy meaning
defined above.
\end{proof}

\subsection{Geometric prior and depth regularization}
\label{app:unit-weight-geometric-kl}

The geometric prior in \cref{eq:conditional-nelbo} is truncated and
renormalized over $\{1,\ldots,K\}$:
\begin{equation}
    \pi_{0,c}(d)=\frac{e^{-cd}}{\sum_{r=1}^{K}e^{-cr}},
    \qquad
    \pi_c(\mathbf z)=\prod_{i\in\mathcal M_t}\pi_{0,c}(z_i).
    \label{eq:geometric-product-prior}
\end{equation}
Here, $c>0$ controls the preference for shallow exits. We use $K=4$ and $c=0.4$, giving
$\pi_{0,c}\approx(0.4131,0.2769,0.1856,0.1244)$ and a prior mean depth
$\E_{D\sim\pi_{0,c}}[D]\approx2.02$.
Larger $c$ favors shallower depths; the learned mean depth also depends
on the prediction objective.

\textbf{Separating individual and aggregate regularization.}
For $M>0$ supervised masked positions with fixed positive depth
distributions $r_i$, write
$\bar r=M^{-1}\sum_i r_i$. Adding and subtracting $\log\bar r$ gives
\begin{equation}
\begin{aligned}
    R_{\alpha,\beta}
    &=\frac{\alpha}{M}\sum_i\KL(r_i\|\pi_{0,c})
       +(\beta-\alpha)\KL(\bar r\|\pi_{0,c})\\
    &=\frac{\alpha}{M}\sum_i\KL(r_i\|\bar r)
       +\beta\KL(\bar r\|\pi_{0,c}).
\end{aligned}
    \label{eq:depth-profile-regularizer}
\end{equation}
The first term in the second line penalizes differences among token
profiles; the second matches their average to the prior. We use
$(\alpha,\beta)=(0.1,1)$ to allow tokens to favor different depths
while retaining stronger aggregate prior matching. For samples with
marginals $z_i\sim r_i$, this objective equals
$\E[M^{-1}\sum_i\widetilde C_i]$, where
\begin{equation}
    \widetilde C_i
      =\alpha\log\frac{r_i(z_i)}{\bar r(z_i)}
       +\beta\log\frac{\bar r(z_i)}{\pi_{0,c}(z_i)}.
    \label{eq:ideal-depth-regularization-sample}
\end{equation}

\textbf{The sampled regularization cost.}
Let $\mathbf Z=(\mathbf z_b)_b$ collect the sampled schedules in a
local minibatch, and let $M$ count its supervised masked positions.
Using gate outputs along these rollouts, form
\begin{equation}
    \widehat r_i(d)=
    \begin{cases}
       \lambda_i^{(d)}\prod_{s<d}(1-\lambda_i^{(s)}), & d<K,\\
       \prod_{s<K}(1-\lambda_i^{(s)}), & d=K,
    \end{cases}
    \qquad
    b_{\mathbf Z}(d)=\frac1M\sum_i\widehat r_i(d).
    \label{eq:rollout-aggregate-profile}
\end{equation}
These full-depth profiles require gate readouts at all nonterminal depths
$s<K$, including after commitment. They summarize the realized rollouts rather
than exact marginal probabilities over alternative histories.
Replacing $\log r_i(z_i)$
by the realized decision score $u_i$ in \cref{eq:schedule-logprob}
and $\bar r$ by $b_{\mathbf Z}$ gives
\begin{equation}
    C_i^{\alpha,\beta}(\mathbf Z)
      =\alpha[u_i-\log b_{\mathbf Z}(z_i)]
       +\beta[\log b_{\mathbf Z}(z_i)-\log\pi_{0,c}(z_i)].
    \label{eq:rollout-depth-regularization-cost}
\end{equation}
The cost is detached inside the sequence advantage. Because the
profiles depend on the sampled history, its expectation need not
equal $R_{\alpha,\beta}$; it is a rollout-based relaxation of that design.

\begin{theorem}[Unit-weight equivalence]
\label{thm:unit-weight-geometric-kl}
Fix a corrupted minibatch. For any sequence $b$ with supervised
positions $\mathcal M_b$, $\alpha=\beta=1$, and any positive aggregate profile,
\begin{equation}
    \E_{\mathbf Z}\!\left[
       \sum_{i\in\mathcal M_b} C_i^{1,1}(\mathbf Z)\right]
      =\KL\!\left(q_\phi(\mathbf z_b)\|\pi_c(\mathbf z_b)\right).
    \label{eq:unit-weight-joint-kl}
\end{equation}
Independence of the exit depths within the sequence is not required.
\end{theorem}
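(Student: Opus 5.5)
With $\alpha=\beta=1$ the two $\log b_{\mathbf Z}(z_i)$ terms in \cref{eq:rollout-depth-regularization-cost} cancel exactly, and this cancellation drives the proof. For every realization of $\mathbf Z$ and every $i$,
\[
    C_i^{1,1}(\mathbf Z)=u_i(\mathbf z_b)-\log\pi_{0,c}(z_i),
\]
provided $b_{\mathbf Z}(z_i)>0$ so that both logarithms are finite. This positivity is exactly what the hypothesis of a positive aggregate profile guarantees. Whatever the rollout-dependent profile is, even though it couples sequences in the minibatch, it drops out pointwise before any expectation is taken. This is why the statement holds for any positive aggregate profile.

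Next I sum over $i\in\mathcal M_b$ and use the chain-rule identity \cref{eq:schedule-logprob}, $\sum_{i\in\mathcal M_b}u_i(\mathbf z_b)=\log q_\phi(\mathbf z_b)$. This identity is the conditional factorization of the sequential decision tree along the realized history, so it needs no independence of exit depths. Together with the product form \cref{eq:geometric-product-prior}, $\sum_{i}\log\pi_{0,c}(z_i)=\log\pi_c(\mathbf z_b)$, this gives the pathwise identity
\[
    \sum_{i\in\mathcal M_b}C_i^{1,1}(\mathbf Z)=\log\frac{q_\phi(\mathbf z_b)}{\pi_c(\mathbf z_b)}.
\]

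It remains to take expectations over $\mathbf Z$. The right-hand side depends only on $\mathbf z_b$, and the marginal law of $\mathbf z_b$ under the minibatch rollout is $q_\phi(\cdot\mid\mathbf x_{t,b},\mathbf y_b)$. Its expectation is therefore $\KL(q_\phi(\mathbf z_b)\|\pi_c(\mathbf z_b))$, which is finite because the support is finite and $\pi_c>0$ everywhere. The one point that needs care is that sequence $b$'s rollout must be unaffected by the other sequences' decisions, so that its marginal really is $q_\phi$. I would justify this from the model: attention is within a sequence, and the gates and commitments for sequence $b$ depend only on its own history.

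I expect the main obstacle to be bookkeeping rather than analysis: checking that $u_i$ in \cref{eq:rollout-depth-regularization-cost} is exactly the realized decision score of \cref{eq:schedule-logprob}, including the forced exit at $K$ with no $\log\lambda$ term, and not the rollout profile $\widehat r_i$. With that check done, the remaining steps are the identities above.
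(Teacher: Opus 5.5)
Your proposal is correct and follows the paper's proof. In both, the $\log b_{\mathbf Z}(z_i)$ terms cancel for every realized minibatch, the chain-rule identity \cref{eq:schedule-logprob} together with the product prior turns the sum into $\log\bigl(q_\phi(\mathbf z_b)/\pi_c(\mathbf z_b)\bigr)$, and taking the expectation gives the KL. You also point out that the marginal law of $\mathbf z_b$ under the minibatch rollout must be $q_\phi$, meaning independently attended sequences do not affect one another's exit decisions. The paper uses this assumption without stating it, and your remark makes it explicit.
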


\begin{proof}
The aggregate terms cancel for every realized minibatch. By
\cref{eq:schedule-logprob} and the product prior,
\[
    \sum_{i\in\mathcal M_b} C_i^{1,1}(\mathbf Z)
      =\sum_{i\in\mathcal M_b}[u_i-\log\pi_{0,c}(z_i)]
      =\log\frac{q_\phi(\mathbf z_b)}{\pi_c(\mathbf z_b)}.
\]
Taking the expectation proves the claim.
\end{proof}

Thus unit coefficients recover the original joint KL in expectation,
whereas $(0.1,1)$ gives a rollout-based relaxation with weaker
regularization of individual variation.
In the first line of \cref{eq:depth-profile-regularizer}, these settings
correspond to KL weights $(1,0)$ and $(0.1,0.9)$, respectively.

\subsection{Variance reduction and the training surrogate}
\label{app:intermediate-loss-proof}
\label{app:practical-supervision}

\textbf{First-pass control variate.}
For each independently attended sequence $b$, let $\mathcal M_b$
contain its supervised masked positions and let
$M=\sum_b|\mathcal M_b|>0$. Let $\ell_i^{(s)}$ denote the cross-entropy
loss for token $i$ at pass $s$. Define the sequence cost and baseline as
\begin{equation}
    R_b=\sum_{i\in\mathcal M_b}
        [\ell_i^{(z_i)}+C_i^{\alpha,\beta}(\mathbf Z)],
    \qquad
    B_b=\sum_{i\in\mathcal M_b}\ell_i^{(1)}.
    \label{eq:implemented-outcome-return}
\end{equation}
The first-pass readout precedes every sampled commitment. Conditional
on the corrupted inputs and current parameters, $B_b$ is therefore
independent of the sampled schedule, and
\begin{equation}
    \E_{q_\phi}[B_b\nabla_\phi\log q_\phi(\mathbf z_b)]
      =B_b\nabla_\phi\sum_{\mathbf z_b}q_\phi(\mathbf z_b)=0.
    \label{eq:first-pass-outcome-baseline}
\end{equation}
Subtracting this detached baseline leaves the expected gate update
unchanged. The resulting gate surrogate (actor) is
\begin{equation}
    \widehat{\mathcal L}_{\mathrm{gate}}
      =\frac1M\sum_b\stopgrad[R_b-B_b]\log q_\phi(\mathbf z_b).
    \label{eq:implemented-outcome-actor}
\end{equation}
Returns and decision scores are summed within each independently
attended sequence, including separate segments of packed examples.
There is no current-minibatch advantage centering. The baseline supplies
an input-dependent reference without an extra forward pass; its variance
benefit depends on how well it tracks the sampled sequence cost.

\textbf{Intermediate denoiser supervision.}
For a token exiting at $z_i$, let
$v_i^{(s)}=\prod_{r<s}(1-\lambda_i^{(r)})$ and define
\begin{equation}
    \omega_i^{(s)}=
    \begin{cases}
       v_i^{(s)}\lambda_i^{(s)}, & s<z_i,\\
       v_i^{(s)}, & s=z_i,\\
       0, & s>z_i.
    \end{cases}
    \qquad \sum_{s\leq z_i}\omega_i^{(s)}=1.
    \label{eq:implemented-intermediate-weights}
\end{equation}
These weights assign the remaining probability mass to the sampled
exit and exclude readouts after target feedback. For the single-$t$
corruption used in the theoretical objective, detaching these weights
gives the per-example denoiser loss
\begin{equation}
    \widehat{\mathcal L}_{\mathrm{den}}
      =\frac1t\sum_{i\in\mathcal M_t}\sum_{s\leq z_i}
          \stopgrad[\omega_i^{(s)}]\ell_i^{(s)}.
    \label{eq:implemented-denoiser-loss}
\end{equation}
Under blockwise corruption, each token instead uses its block's masking
probability. Minibatch normalization follows the implementation described
in Appendix~\ref{app:experimental-training}.
This reuses pre-commitment predictions to train the denoiser; the gate cost in
\cref{eq:implemented-outcome-return}
still uses each token's sampled-exit loss.
The weighted denoiser objective need not have the same expected
gradient as sampled-exit supervision. We assess its variance empirically:
Figure~\ref{fig:ablations} (right) compares the trace of the gradient
covariance for the measured parameter subset, using fixed corrupted
inputs and paired sampled trajectories. This measures variation from
trajectory sampling rather than from sampling training examples.

\textbf{Complete training surrogate.}
After aggregating and normalizing the denoiser loss as described above, we combine it with the gate surrogate and an auxiliary AR loss $\widehat{\mathcal L}_{\mathrm{AR}}$. This auxiliary loss is computed on the same sequences using standard next-token prediction, which has been shown to be beneficial for AR-to-DLM conversion~\citep{liu2025wedlm,Fu2026NemotronLabsDiffusionAT,tian2025next,Gat2025SetBD}. With equal diffusion/AR weighting, the complete surrogate is:
\begin{equation}
    \widehat{\mathcal L}_{\mathrm{impl}}
      =
          \widehat{\mathcal L}_{\mathrm{den}}
          +\widehat{\mathcal L}_{\mathrm{gate}}
          +\widehat{\mathcal L}_{\mathrm{AR}}.
    \label{eq:implemented-training-surrogate}
\end{equation}
The actor uses sequence-summed costs with a single $1/M$ normalization;
inverse-mask-probability weighting applies only to the denoiser. This surrogate
combines the sequence-level credit structure of
Proposition~\ref{thm:one-schedule-estimator} with relaxed depth
regularization and intermediate supervision. The proposition's
unbiasedness result concerns the original joint-KL objective.

\section{Experimental Details}
\label{app:training-details}
\label{sec:training-details}

This appendix describes the model configurations, optimization settings, and
evaluation protocols used in Section~\ref{sec:experiments}. We distinguish the
single-token decoding protocol used to compare generation quality from the
parallel decoding protocol used to measure the quality--efficiency tradeoff.

\subsection{Models and training}
\label{app:experimental-training}

\textbf{Architecture.}
We initialize \modelname{}-1.7B and \modelname{}-8B from the corresponding
Qwen3 backbones~\citep{yang2025qwen3}. The 1.7B model applies its complete
28-layer Transformer stack recurrently. The 8B model partitions its
36 layers into a 10-layer prelude, a shared 16-layer recurrent core, and a
10-layer coda. Layer ranges below use zero-based, half-open indexing.
Both configurations use a maximum recurrent depth of four and untied input
and output embeddings.
The coda and output head produce a readout at each
recurrent pass; the coda is therefore re-evaluated when the core recurs.

\begin{table}[ht]
\centering
\caption{Model configurations and common optimization settings. A batch
contains 256 packed examples across all workers and accumulation steps.}
\label{tab:training-configurations}
\setlength{\tabcolsep}{6pt}
\begin{tabular}{@{}lcc@{}}
\toprule
Setting & \modelname{}-1.7B & \modelname{}-8B \\
\midrule
Transformer layers & 28 & 36 \\
Prelude layers & None & $[0,10)$ \\
Recurrent core & $[0,28)$ & $[10,26)$ \\
Coda layers & None & $[26,36)$ \\
Maximum recurrent depth & 4 & 4 \\
Packed sequence length & 4,096 & 4,096 \\
Training block size & 32 & 32 \\
Global batch size & 256 & 256 \\
Optimizer & AdamW & AdamW \\
Peak learning rate & $10^{-5}$ & $10^{-5}$ \\
Learning-rate schedule & Cosine & Cosine \\
Warmup fraction & $0.03$ & $0.03$ \\
Weight decay & $0.01$ & $0.01$ \\
Gradient clipping norm & $1.0$ & $1.0$ \\
Arithmetic & bfloat16 & bfloat16 \\
\bottomrule
\end{tabular}
\end{table}

\textbf{Data and corruption.}
Both models are converted by supervised fine-tuning, without a separate
continued-pretraining stage, using the WeDLM block-diffusion
framework~\citep{liu2025wedlm} on the 5B-token corpus described in the main text.
Training examples are formatted as
conversations, packed into 4,096-token sequences, and supervised on the
completion tokens; prompt tokens provide context. Each 32-token block
receives an independently sampled masking probability in $[0,1]$, and the
denoiser losses use inverse-mask-probability weights normalized over
the local minibatch's supervised masked positions. The outcome actor
uses sequence-summed costs with one masked-token-count normalization,
as specified in Appendix~\ref{app:practical-supervision}.
The training recipe also includes a next-token prediction auxiliary loss
with the same relative weight as the diffusion loss.

\textbf{Optimization.}
Table~\ref{tab:training-configurations} lists the settings shared across the
two scales. Both use AdamW~\citep{loshchilov2017decoupled} and DeepSpeed
ZeRO-2. The 8B schedule runs for eight epochs (34{,}512 optimizer steps) and uses
activation checkpointing on every second recurrent-layer invocation;
the 1.7B configuration does not require it.
For latent-depth training, each rollout samples commitments from the
learned gate and feeds the committed target embeddings into later passes.
The outcome actor, first-pass control variate, and complete training
surrogate are given in Appendix~\ref{app:practical-supervision}.

\subsection{Generation quality and inference efficiency}
\label{app:evaluation-protocols}

\textbf{Generation quality.}
The benchmark comparison in Table~\ref{tab:main-results} follows the
OpenCompass protocol~\citep{2023opencompass} adopted by
SDAR~\citep{cheng2025sdar}. We apply each model's native chat template,
use greedy token selection, and cap generation at 4,096 tokens. ARC-Easy,
ARC-Challenge, MMLU, MMLU-Pro, and GPQA-Diamond measure general and
scientific reasoning; GSM8K and MATH-500 measure mathematical reasoning;
HumanEval, sanitized MBPP, and their EvalPlus variants measure code
generation. We report accuracy or exact match for non-code benchmarks
and execution-based pass@1 for code. The table's average is the unweighted
mean of its eleven benchmark scores.

For this quality comparison, each model pass commits one token using each method's default decoding rule. 
\modelname{} uses a window size of 16 and evaluates the mean cumulative halting criterion on the remaining positions, with $q=0.4$ at
1.7B and $q=0.5$ at 8B.
This preset disables entropy-based selection, so $\tau$ is inactive; a single
denoising step can still commit multiple tokens over successive recurrent passes.
Diffusion baselines use their single-token presets, including uncertainty-based
position selection where supported.

\textbf{Inference efficiency.}
Figures~\ref{fig:teaser-8b} and~\ref{fig:speed-pareto} use parallel commitment
and single-stream generation on one NVIDIA B200. \modelname{} and WeDLM use
their native CUDA-graph engines; Qwen3 uses vLLM~\citep{zheng2025vllm}.
SDAR uses JetEngine, LLaDA uses dInfer~\citep{ma2025dinfer},
Fast-dLLM-v2 uses its native engine, and Dream uses a compiled Fast-dLLM-v1
implementation with prefix caching~\citep{wu2025fastdllmtrainingfreeaccelerationdiffusion}.
Generation timing includes prefill and decoding, with device synchronization,
and excludes input tokenization and answer scoring.
Dataset throughput is the total generated-token count divided by the summed generation
time, rather than the mean of per-example rates. The $q,\tau$ sweeps use a
16-token window and position coefficient $\gamma=0.02$; the quality preset above
and this entropy-based parallel decoder are separate operating modes.

\textbf{Arithmetic cost.}
We estimate floating-point operations from execution records and model
dimensions. For each request, let $N_{\mathrm{layer}}$ count
token--Transformer-layer evaluations, $N_{\mathrm{head}}$ count
vocabulary readouts, and $N_{\mathrm{gen}}$ count generated tokens.
The estimated cost per generated token is
\begin{equation}
  \mathrm{FLOPs/token}
  =\frac{2P_{\mathrm{layer}}N_{\mathrm{layer}}
          +2P_{\mathrm{head}}N_{\mathrm{head}}
          +4H d_{\mathrm{head}}\bar L N_{\mathrm{layer}}}
         {N_{\mathrm{gen}}},
  \label{eq:experimental-flop-accounting}
\end{equation}
where $P_{\mathrm{layer}}$ counts dense projection parameters in one
Transformer layer, $P_{\mathrm{head}}$ counts output-head parameters,
$H$ is the number of query heads, $d_{\mathrm{head}}$ is the head dimension,
and $\bar L$ is approximated by the
prompt length plus half the generated length.
Dataset-level cost is total estimated FLOPs divided by total generated
tokens. Both \modelname{}-8B and WeDLM-8B have hidden dimension 4{,}096, feed-forward
dimension 12{,}288, 32 query heads, eight KV heads, head dimension 128,
and vocabulary size 151{,}936, giving approximately 192.9M dense
parameters per layer and 622.3M output-head parameters.
A multiply and an add count as two floating-point operations.

The estimate includes prefill, evaluated window positions, CUDA-graph
padding, and executed vocabulary readouts. For WeDLM, whole-forward
graph replay bypasses host-side layer-counter updates, so executed
work is reconstructed from recorded prefill lengths, decoding-step
counts, and graph shapes. The attention term uses the context-length
approximation above; minor elementwise operations are omitted.
For reference, a complete $K$-pass 8B denoising invocation evaluates
$10+K(16+10)$ Transformer layers per position: 62, 114, and 218 at
$K=2,4,8$, respectively.

\clearpage
\section{Case Studies}
\label{app:case-studies}

Aggregate depth statistics summarize computation across many responses.
Figure~\ref{fig:case-study-depth-examples} gives a complementary view of
three individual GSM8K generations from our decoder. Tokens committed at
different recurrent passes are interleaved within each response, illustrating
how commitment depth varies across positions.
The examples visualize these local depth assignments; the aggregate
allocation analysis is reported in Figure~\ref{fig:training-allocation}.

\begin{figure}[!ht]
  \centering
  \includegraphics[width=\textwidth]{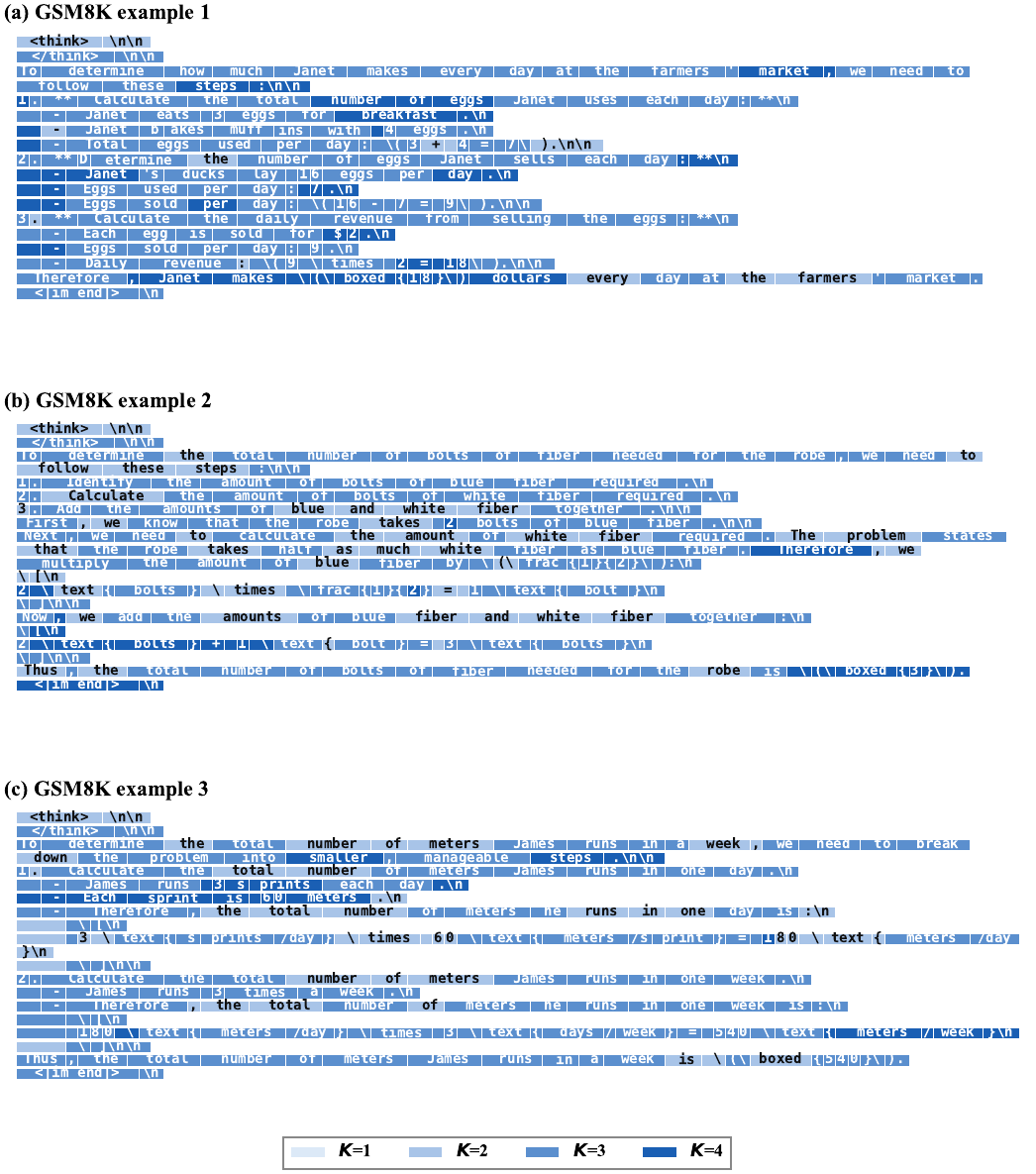}
  \caption{Token-wise recurrent depth in three GSM8K responses generated by
  \modelname{}. Each token is shaded by its recorded commitment pass;
  darker shades indicate later commitment within the corresponding denoising step.
  These examples use $K=4$,
  $q=0.5$, $\tau=0.4$, and a 16-token decoding window.}
  \label{fig:case-study-depth-examples}
\end{figure}
\FloatBarrier

\end{document}